% Sample LaTeX file for creating a paper in the Morgan Kaufmannn two
% column, 8 1/2 by 11 inch proceedings format.

\documentclass[letterpaper]{article}
\usepackage{uai2020}
\usepackage[margin=1in]{geometry}

% Set the typeface to Times Roman
\usepackage{times}

\usepackage{natbib}
\usepackage{amsmath}
\usepackage{amsthm}
\usepackage{enumitem}
\usepackage{amsfonts}
\usepackage{mathtools}
\usepackage{graphicx}
\usepackage{caption}
\usepackage{algorithm,algorithmic}
\usepackage{dsfont}
\usepackage{subcaption}

\theoremstyle{definition}
\newtheorem{theorem}{Theorem}

\newtheorem{proposition}{Proposition}

\newtheorem{definition}{Definition}
\newtheorem{example}{Example}
\newtheorem{assumption}{Assumption}

\newcommand*{\R}{\mathbb{R}}
\newcommand{\G}{\mathcal{G}}
\newcommand{\E}{\mathbb{E}}

\newcommand{\bs}{\boldsymbol}

\newcommand{\indep}{\perp\!\!\!\!\!\!\perp}
\graphicspath{ {./figures/} }

\title{Model-Augmented Conditional Mutual Information Estimation \\ for Feature Selection}

\author{
{
{\bf Alan Yang}\textsuperscript{1}, 
{\bf AmirEmad Ghassami}\textsuperscript{1}, 
{\bf Maxim Raginsky}\textsuperscript{1}, 
{\bf Negar Kiyavash}\textsuperscript{2}, 
\normalfont{and} 
{\bf Elyse Rosenbaum}\textsuperscript{1}
} 
\\
%\textsuperscript{1}University of Illinois at Urbana-Champaign,\quad 
%\textsuperscript{2}\'Ecole Polytechnique F\'ed\'erale de Lausanne
%\\
%
\textsuperscript{1}Department of Electrical and Computer Engineering, University of Illinois at Urbana-Champaign (UIUC) \\
\textsuperscript{2}College of Management of Technology, \'Ecole Polytechnique F\'ed\'erale de Lausanne (EPFL)
\\
\texttt{
	\{asyang2, ghassam2, maxim, elyse\}@illinois.edu, negar.kiyavash@epfl.ch
}
} % LEAVE BLANK FOR ORIGINAL SUBMISSION. % UAI  reviewing is double-blind.

% The author names and affiliations should appear only in the accepted paper.

\begin{document}

\maketitle

\begin{abstract}
    Markov blanket feature selection, while theoretically optimal, is generally challenging to implement. This is due to the shortcomings of existing approaches to conditional independence (CI) testing, which tend to struggle either with the curse of dimensionality or computational complexity. We propose a novel two-step approach which facilitates Markov blanket feature selection in high dimensions. First, neural networks are used to map features to low-dimensional representations. In the second step, CI testing is performed by applying the $k$-NN conditional mutual information estimator to the learned feature maps. The mappings are designed to ensure that mapped samples both preserve information and share similar information about the target variable if and only if they are close in Euclidean distance. We show that these properties boost the performance of the $k$-NN estimator in the second step. The performance of the proposed method is evaluated on both synthetic and real data.
\end{abstract}

\vspace{-2mm}
\section{INTRODUCTION}
\vspace{-1mm}

A prominent approach for selecting features relevant to a target variable is to choose a minimal set that renders the target variable conditionally independent of the rest of the features. This approach is referred to as Markov blanket feature selection, and has attracted wide attention due to its theoretical optimality and strong performance \citep{AliferisSTMK2010}. However, testing for conditional independence (CI) from data is a notoriously challenging task, especially for continuous-valued variables of high dimension \citep{Paninski2003}. As a result, Markov blanket feature selection is challenging to implement in the case of high-dimensional features. For instance, time-series features with high Markov order, such as speech samples, are equivalent to high-dimensional variables.
% long-term dependency or non-stationary would require CI tests involving lengthy sequences.

One of the most promising approaches for (conditional) independence testing is based on the estimation of mutual information (MI) and conditional mutual information (CMI).
% One of the most promising approaches for CI testing is based on the estimation of mutual information (MI) and, more generally, conditional mutual information (CMI). 
Due to its desirable properties, such as invariance under invertible transformations and ability to capture arbitrary nonlinear relationships \citep{CoverT2006}, MI commonly appears in several machine learning tasks, including feature selection \citep{Battiti1994}, clustering \citep{KraskovG2008}, and learning graphical models \citep{Koller2009}.
% The CMI between random variables $X$ and $Y$ conditioned on a random variable $Z$ is zero if and only if $X$ and $Y$ are independent when conditioned on $Z$. 
% As a result, it is a quantity that commonly appears in several machine learning tasks, including feature selection \citep{Battiti1994}, clustering \citep{KraskovG2008}, and conditional independence testing for learning graphical models \citep{Koller2009}.
Numerous MI and CMI estimators have been proposed for continuous-valued variables. Most non-parametric estimators are based on $k$-NN density estimation \citep{KraskovSG2004,FrenzelP2007,GaoSOP2017} or kernel density estimation \citep{MoonKH2017}. Those estimators have been demonstrated to be fast and reliable for many problems with moderate dimension and sample size, yet have difficulty scaling to high dimensions and may exhibit high sample complexity \citep{GaoVG2015}.

Towards MI estimation in high dimensions, model-based approaches, which optimize variational bounds on MI, have recently been proposed \citep{BelghaziIASOYCH2018, PooleOVAT2019}. Model-based methods take advantage of the representational power of neural networks in order to discover complex structures in the variables. This can give them an advantage over non-parametric methods in high dimensions. Similar model-based approaches have also been proposed for CMI estimation \citep{MukherjeeAK2019}.
% However, the extension of existing model-based MI estimators to CMI is not straightforward, due to the difficulty of obtaining samples from conditional distributions \citep{SenSSDS2017,Runge2018}. Moreover,
However, model-based methods require an individual neural network to be trained and tuned for each CI test performed, rendering those methods unsuitable for Markov blanket feature selection.
% While powerful, model-based methods require an individual neural network to be trained and tuned for each CI test performed, and are therefore computationally infeasible for tasks such as structure learning, where the number of required CI tests can be exponential in the number of features.

In this paper, we propose a model-augmented approach to CI testing suitable for Markov blanket feature selection. Our method simultaneously retains the computational tractability of non-parametric estimators and leverages the representational power of model-based methods. The main contributions of our work are as follows.
\begin{itemize}%[noitemsep]
\vspace{-0.5mm}
    \item We present an efficient Markov blanket feature selection algorithm for high-dimensional features based on CMI estimation. It efficiently implements the required CI tests by gradually increasing the conditioning set size (Section \ref{sec:MB_feature_selection}).

    \item We propose a two-step approach for efficiently estimating each of the required CMIs in the high dimensional case. In the first step, low-dimensional mappings of the features are learned by optimizing an objective with a likelihood and regularization term. In the second step, the $k$-NN CMI estimator is used to perform the CI tests on the mapped samples. 

    \item  Regarding the likelihood term in the objective, we show that it minimizes an upper bound on the CMI estimation error introduced by the mappings. Thus it preserves all the information required for accurate CMI estimation (Subsection \ref{subsec:mle_mappings}).

    \item We propose an efficient method for simultaneously learning the feature mappings for all CI tests that uses parameter sharing and a feature dropout technique that we call block-dropout (Subsection \ref{subsec:block_dropout}).

    \item Regarding the regularization term in the objective, we propose a novel regularization approach which ensures that the mappings satisfy the \emph{information efficiency} property: mapped samples share similar information about the target variable if and only if they are close in Euclidean distance. We show that the regularization term improves the performance of the $k$-NN CMI estimator in the second step of our approach (Section \ref{sec:info_efficiency_knn}).
\vspace{-0.5mm}
\end{itemize}

The proposed method is evaluated on both synthetic and real data in Section \ref{sec:experiments}. We applied our method to select time-series diagnostic features for predicting datacenter hard disk drive (HDD) failures. Due to the non-stationary nature of the data, existing methods either fail to eliminate all redundant features consistent with expert knowledge or miss many informative features.

\paragraph{Related Work.}
There is an extensive literature on non-parametric CI tests that use test statistics other than CMI. For example, kernel-based methods characterize CI in reproducing kernel Hilbert spaces \citep{GrettonFTSSS2007,LeeH2017}. Those methods have demonstrated some success for high-dimensional problems, but tend to scale poorly with sample size since large kernel matrices need to be computed. Model-based approaches have been proposed for the problem of general-purpose CI testing in high dimensions \citep{SenSSDS2017}. However, as is the case for model-based CMI estimators, the complexity of those methods renders them unsuitable for Markov blanket feature selection. In this work, we focus on the $k$-NN CMI estimator in the second step of our approach since it performs comparably to kernel-based methods \citep{Runge2018}, yet is computationally easier.

Our Markov blanket feature selection approach is similar to embedded feature selection methods, which attempt to efficiently consider relationships between features by performing feature selection and learning simultaneously. For example, the Lasso \citep{Tibshirani1996} selects features using $\ell_1$ regularization for regression; this concept has since been extended to more general learning-based feature selection \citep{LiCW2015}.
% or feature dropout \citep{ChangRG2017}. 
Other approaches attempt to quantify the effect of a feature on a learned model \citep{YeS2018} or learn a fixed-size set of features in an unsupervised setting \citep{AbidBZ2019}. Unlike those methods, our proposed approach directly seeks the minimal feature set for a particular prediction task using conditional independence information. 
%In particular, feature selection methods aim to improve the performance and efficiency of downstream predictors and provide insight into the the underlying processes that generated the data \citep{guyonE2003}.

\paragraph{Notation.}
We denote random variables by capital letters (e.g., $X$) and their realizations by small letters (e.g., $X=x$). We assume that all variables are jointly continuous, although the results can be extended to the case of discrete target variable $Y$ by appropriately replacing densities with discrete distributions.

Let $V=\{X_1,...,X_m\}$ denote the set of all the features and let $Y$ be the target variable. We denote the set of indices $=\{1,...,k\}$ for $1\le k\le m$ by $[k]$, and a subset of the features with indices in $S\subseteq[m]$ by $X_S$.% We use the notation $V$ and $X_{[m]}$ interchangeably.

\vspace{-2mm}
%\section{MARKOV BLANKET FEATURE SELECTION}
 \section{AN EFFICIENT IMPLEMENTATION OF MARKOV BLANKET FEATURE SELECTION}
\vspace{-1mm}
\label{sec:MB_feature_selection}

In this Section, we first formally define Markov blanket feature selection and then present an efficient implementation of this approach in Subsection \ref{subsec:control_conditioning_set}.

Consider a set of features $V = \{X_1,...,X_m\}$ that are used to predict a target variable $Y$. The goal of feature selection is to find a minimal subset of the $X_i$'s that can be used to optimally predict $Y$. We focus on the case where $Y$ is low-dimensional, either initially or with the aid of dimensionality reduction. The features may be high-dimensional however, rendering direct application of the $k$-NN CMI estimator impractical.
\begin{definition}[Markov blanket]\label{def:markov_blanket}
A subset $X_{S_M}\subseteq V$ is a \textit {Markov blanket} of $Y$ if it is a minimal set such that $Y$ is conditionally independent of any subset $X_S\subseteq V$ given $X_{S_M}$. This is denoted as $Y\indep X_S\vert X_{S_M}$.
% A subset $X_{S_M}\subseteq V$ is a \textit {Markov blanket} of $Y$ if it is a minimal set, such that for any subset $X_S\subseteq V$, $Y$ is independent of $X_S$ conditioned on $X_{S_M}$. This is denoted as $Y\indep X_S\vert X_{S_M}$.
\end{definition}
Therefore, given the Markov blanket, the values of the rest of the variables become superfluous. 
Any approach which chooses relevant features in the sense of Definition \ref{def:markov_blanket} is referred to as Markov blanket feature selection.

Let $\G$ be the DAG corresponding to the Bayesian network of $V\cup \{Y\}$. $X_i$ is called a parent of $Y$ if there is an edge $X_i\rightarrow Y$ in $\mathcal{G}$; similarly, $X_i$ is a child of $Y$ if there is an edge $Y\rightarrow X_i$. Under some conditions (see Supplementary Materials), the Markov blanket of each variable is unique and consists of its parents, its children, and the other parents of its children (called the \emph{coparents}) \citep{Koller2009}. 

\vspace{-2mm}
\subsection{Controlling the Conditioning Set Size}\label{subsec:control_conditioning_set}
\vspace{-1mm}

From Definition \ref{def:markov_blanket} it is evident that a single CI test is sufficient to determine whether a feature belongs to the Markov blanket of the target variable; of course, this test is performed conditioned on all of the other features. In practice, testing for conditional independence is challenging when the conditioning set is large due to the curse of dimensionality. Therefore, in general one cannot hope to determine whether or not a particular feature is in the Markov blanket using only a single CI test.

Our feature selection approach provides an efficient method for identifying the Markov blanket, and is presented in Algorithm \ref{alg:markov_blanket}. In this algorithm, $\Delta$ denotes the maximum degree of the underlying Bayesian network, which in practice can be treated as a hyperparameter. The first part of the algorithm (lines 3-14) searches for the set $\textit{Adj}$, which is the set of all features adjacent to the target variable (parents and children). A feature is declared to be non-adjacent if a feature set rendering it conditionally independent of $Y$ is found. For each feature, we gradually increase the number of conditioning variables, starting with the empty set; this is similar to the approach of the PC algorithm for causal structure learning \citep{spirtes2000causation}. As is the case for PC, the computational complexity of this part of the algorithm is polynomial in the number of features.

As mentioned earlier, coparents of $Y$ also belong to the Markov blanket of $Y$. Coparents have the property that they are statistically dependent conditioned on any subset of variables which contains at least one of their common children \citep{pearl2014probabilistic}. Therefore, in order to find the coparents of $Y$ we perform a CI test conditioned on variables adjacent to $Y$ (lines 15-21). Finally, Algorithm \ref{alg:markov_blanket} returns the set of adjacents and coparents of $Y$.

\begin{algorithm}[t]
\caption{Markov Blanket Feature Selection}
\begin{algorithmic}[1]
\STATE {\bfseries input:} features $V$, target $Y$,
\STATE \qquad\quad max conditioning set size $\Delta$
% \STATE {}
\STATE {\textit{// Identify Adjacents (Parents and Children)}}
\STATE Initialize $Adj\gets V$
\FOR {$c \text{ from } 0 \text{ to } \Delta$}
    \FOR {$X_i\in Adj$}
    \FOR {$X_S\subseteq Adj\setminus\{X_i\}$ {such that} $|X_S| = c$}
        \IF {$Y\indep X_i | X_S$}
            \STATE $Adj \gets Adj\setminus \{X_i\}$
            \STATE \textbf{break}
        \ENDIF
    \ENDFOR
    \ENDFOR
\ENDFOR
% \STATE{}
\STATE {\textit{// Identify Co-parents}}
\STATE Initialize $CoP\gets \emptyset$.
\FOR{$X_i\in V\setminus Adj$}
    \IF {$Y\not\indep X_i|Adj$}
        \STATE $CoP \gets CoP\cup\{X_i\}$
    \ENDIF
\ENDFOR
\STATE \textbf{return} $Adj\cup CoP$
\end{algorithmic}
\label{alg:markov_blanket}
\end{algorithm}

In many applications, the target variable cannot be the \emph{cause} of any of the features in the system. 
This is always the case when features are time series and the target is a variable to be predicted at the end of the time horizon. One example is the HDD failure prediction problem considered in Section \ref{sec:experiments}.
In this case, Markov blanket feature selection is equivalent to causal feature selection, which is desirable for its robustness to shifts in the distributions  \citep{guyon2007causal}. 
This connection is explored in more detail in the Supplementary Materials. 

We use CMI as the test statistic for the CI tests in Algorithm \ref{alg:markov_blanket}. The following section introduces our proposed CMI estimation method.

\vspace{-2mm}
\section{MODEL-AUGMENTED CMI ESTIMATION}\label{sec:model_aug_cmi}
\vspace{-1mm}

For jointly continuous random variables $X$, $Y$, and $Z$, the CMI between $X$ and $Y$ given $Z$ is defined as
\begin{equation}\label{eq:conditional_mutual_information}
I(X;Y\vert Z) = \E \Bigg[ \log \frac{p(X,Y|Z)}{p(X|Z)p(Y|Z)} \Bigg],
\end{equation}
where $p(X,Y|Z)$, $p(X|Z)$, and $p(Y|Z)$ are the conditional densities and the expectation is taken over the joint distribution. When the conditioning on $Z$ is removed, \eqref{eq:conditional_mutual_information} is known as the mutual information (MI) between $X$ and $Y$, and is denoted by $I(X;Y)$.

%Let $X=\begin{bmatrix} X_1 & X_2 & \cdots & X_m \end{bmatrix}$ be the concatenation of features; note that each features is high-dimensional. For example, if $X_i\in\R^d$ for all $i$, we have $X\in\R^{d\times m}$. 
For Markov blanket feature selection, Algorithm \ref{alg:markov_blanket} requires estimates of a collection of CMI values of the form $I(Y;X_i|X_S)$ for a feature $X_i\in V$ and conditioning set $X_S\subseteq V\setminus\{X_i\}$. We propose a two-step approach for estimating each of those CMI values.
\begin{enumerate}
    \item For each feature $X_i$, find a low dimensional mapping $f_i: X_i \mapsto f_i(X_i)$. %We denote the vector $F(X) = \begin{bmatrix} f_1(X_1) & f_2(X_2) & \cdots & f_m(X_m) \end{bmatrix}$ to be the concatenation of the mapped features.

    % \item Estimate $I(Y;f_i(X_i)|\{f_j(X_j) : j\in S\})$ using the non-parametric $k$-NN CMI estimator as a proxy for the true CMI $I(Y;X_i|X_S)$. 
    
    % \item Estimate $I(Y;f_i(X_i)|F(X_S))$ using the non-parametric $k$-NN CMI estimator as a proxy for the true CMI $I(Y;X_i|X_S)$, where for any subset of features $X_S\subseteq V$, let $F(X_S)=\{f_j(X_j) : X_j\in X_S\}$ denote the set of mapped features in $X_S$.
    \item Estimate $I(Y;f_i(X_i)|F(X_S))$ as a proxy for the true CMI $I(Y;X_i|X_S)$ using the $k$-NN CMI estimator, where $F(X_S)\vcentcolon=\{f_j(X_j) : j\in S\}$.
\end{enumerate}
% We %henceforth
% denote $F(X_S)=\{f_j(X_j) : j\in S\}$ for feature subset $X_S$. Therefore, in the second step, the $k$-NN CMI estimator is used to estimate $I(Y;f_i(X_i)|F(X_S))$.

We propose a regularized maximum likelihood objective for learning the feature mappings in the first step. The analysis of the likelihood part of the objective and our proposed efficient implementation are presented in Subsections \ref{subsec:mle_mappings} and \ref{subsec:block_dropout}, respectively. The regularization part of the objective is introduced in Section \ref{sec:info_efficiency_knn}.

\vspace{-2mm}
\subsection{Learning Information-Preserving Feature Mappings}\label{subsec:mle_mappings}
\vspace{-1mm}

As mentioned earlier, we propose to use a maximum likelihood approach to learn the mappings. In this section we analyze the connection between the likelihood term and the mapping-induced error.
% we show that this estimator minimizes an upper bound on the mapping-induced error.

We first discuss the case of estimating the MI $I(Y;X_i)$ between the target variable and a single feature $X_i$; this is required for the unconditional independence tests in Algorithm \ref{alg:markov_blanket}.
% We assume in this section that all variables are jointly continuous, although the results can be extended to the case of discrete target variable $Y$ by appropriately replacing densities with discrete distributions. 
We show that a maximum likelihood estimator minimizes an upper bound on the mapping-induced error $I(Y;X_i) - I(Y;f_i(X_i))$; note that this quantity is positive by the data-processing inequality \citep{CoverT2006}. 

Since the underlying distributions are unknown, for each $i\in [m]$ we introduce a distribution $q_{\theta_i}(Y|f_i(X_i))$, parameterized by $\theta_i$, which serves as a surrogate for the true conditional distribution $p(Y|f_i(X_i))$. In practice, the surrogate distributions are chosen to be in a parametric family. For example, $q_{\theta_i}$ may be chosen to be a normal distribution for continuous, univariate $Y$, where the mean and variance are functions of $f_i(X_i)$ parameterized by neural networks.

\begin{theorem}\label{theorem:mi_lower_bd}
The following minimizes an upper bound on the non-negative error $I(Y;X_i) - I(Y;f_i(X_i))$:
\begin{equation}\label{eq:mi_opt}
\max_{f_i,\theta_i}\, \E \big[ \log q_{\theta_i}(Y|f_i(X_i)) \big],
\end{equation}
where the expectation is taken over the joint distribution of $Y$ and $X_i$.
\end{theorem}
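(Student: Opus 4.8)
The plan is to reduce the mapping-induced error to a conditional entropy and then control that entropy by a cross-entropy (log-loss) term. First I would write $Z_i \defeq f_i(X_i)$ and combine $I(Y;X_i) = H(Y) - H(Y\mid X_i)$ with $I(Y;f_i(X_i)) = H(Y) - H(Y\mid f_i(X_i))$ to obtain
\[
I(Y;X_i) - I(Y;f_i(X_i)) \;=\; H(Y\mid f_i(X_i)) - H(Y\mid X_i).
\]
Since $f_i$ is a deterministic map, $Y - X_i - f_i(X_i)$ is a Markov chain, so the data-processing inequality makes this difference non-negative, consistent with the statement. Crucially, $H(Y\mid X_i)$ does not depend on $f_i$ or $\theta_i$, so for the purposes of the optimization it is a constant.

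Next I would upper bound $H(Y\mid f_i(X_i)) = -\E\big[\log p(Y\mid f_i(X_i))\big]$ by the cross entropy against the surrogate $q_{\theta_i}$. By the non-negativity of the conditional Kullback--Leibler divergence (Gibbs' inequality),
\[
-\E\big[\log q_{\theta_i}(Y\mid f_i(X_i))\big] - H(Y\mid f_i(X_i)) \;=\; \E_{Z_i}\!\big[\, D_{\mathrm{KL}}\big(p(Y\mid Z_i)\,\big\|\,q_{\theta_i}(Y\mid Z_i)\big)\big] \;\ge\; 0,
\]
with $Z_i = f_i(X_i)$, so that $H(Y\mid f_i(X_i)) \le -\E\big[\log q_{\theta_i}(Y\mid f_i(X_i))\big]$. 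Combining with the previous display gives, for every choice of $f_i,\theta_i$,
\[
I(Y;X_i) - I(Y;f_i(X_i)) \;\le\; -\,\E\big[\log q_{\theta_i}(Y\mid f_i(X_i))\big] \;-\; H(Y\mid X_i).
\]
Because the term $-H(Y\mid X_i)$ is constant with respect to $(f_i,\theta_i)$, minimizing the right-hand side over $(f_i,\theta_i)$ is exactly equivalent to maximizing $\E\big[\log q_{\theta_i}(Y\mid f_i(X_i))\big]$, i.e.\ the program in \eqref{eq:mi_opt}; this proves the claim.

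There is no substantial technical obstacle here; the only points needing care are (i) verifying the Markov-chain/data-processing step for a deterministic, possibly non-invertible $f_i$, so that the error is genuinely non-negative, and (ii) a measure-theoretic caveat, since the identities involve conditional densities and differential-type conditional entropies of continuous variables — one should assume the relevant conditional densities exist and the entropies are finite (or otherwise interpret the bound in the extended reals). It is also worth noting in passing that the bound is tight precisely when $q_{\theta_i}(Y\mid f_i(X_i)) = p(Y\mid f_i(X_i))$ almost surely, which is what motivates taking $q_{\theta_i}$ in a sufficiently expressive parametric family.
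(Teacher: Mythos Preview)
Your argument is correct and follows essentially the same route as the paper: rewrite the error as $h(Y\mid f_i(X_i)) - h(Y\mid X_i)$, upper bound $h(Y\mid f_i(X_i))$ by the cross-entropy $-\E[\log q_{\theta_i}(Y\mid f_i(X_i))]$ via non-negativity of the KL divergence, and observe that $h(Y\mid X_i)$ is constant in $(f_i,\theta_i)$. Your additional remarks on the data-processing step, tightness, and the measure-theoretic caveats are sound and go slightly beyond what the paper records.
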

Intuitively, by maximizing the data log likelihood of a surrogate distribution $q_{\theta_i}(Y|f_i(X_i))$, $f_i$ is encouraged to be a sufficient statistic of $X_i$ for $Y$.

The following theorem shows a similar result for estimating the CMI $I(Y;X_i|X_S)$. For all $i\in[m]$ and $S\subseteq [m]\setminus\{i\}$, consider surrogate distributions $q_{\theta_i}(Y|f_i(X_i))$ and $q_{\theta_{i,S}}(Y|f_i(X_i),F(X_S))$ with respective parameters $\theta_{i}$ and $\theta_{i,S}$ for the true distributions $p(Y|f_i(X_i))$ and $p(Y|f_i(X_i),F(X_S))$.

\begin{theorem}\label{theorem:cmi_consistency}
The following minimizes an upper bound on the error $|I(Y;X_i|X_S)-I(f_i(X_i)|F(X_S))|$:
\begin{alignat}{2}\label{eq:cmi_opt}
  &\max_{\substack{\theta_i,\theta_{i,S}, \\ f_j : j\in \{i\}\cup S}}
     & \E\Big[&\log q_{\theta_i}(Y|f_i(X_i))\nonumber \\[-6mm]
  && &+ \log q_{\theta_{i,S}}(Y|f_i(X_i),F(X_S))\Big],
\end{alignat}
where the expectation is over the joint distribution of $Y$, $X_i$, and $X_S$.
\end{theorem}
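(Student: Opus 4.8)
The plan is to adapt the proof strategy of Theorem~\ref{theorem:mi_lower_bd} to a chain-rule decomposition of the two conditional mutual informations involved. Abbreviating $Z \defeq F(X_S)$, I would first record the identity $I(Y;X_i\mid X_S) = H(Y\mid X_S) - H(Y\mid X_i,X_S)$, a fixed quantity that does not depend on the mappings or surrogates, and likewise $I(Y;f_i(X_i)\mid Z) = H(Y\mid Z) - H(Y\mid f_i(X_i),Z)$. Applying the chain rule for mutual information to both sides and subtracting yields $I(Y;X_i\mid X_S) - I(Y;f_i(X_i)\mid Z) = A - B$, where $A \defeq I(Y;X_i,X_S) - I(Y;f_i(X_i),Z)$ and $B \defeq I(Y;X_S) - I(Y;Z)$. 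Both $A$ and $B$ are nonnegative by the data-processing inequality, since $(f_i(X_i),Z)$ and $Z$ are deterministic functions of $(X_i,X_S)$ and $X_S$ respectively. Hence the estimation error $\big|I(Y;X_i\mid X_S) - I(Y;f_i(X_i)\mid Z)\big|$ is at most $A + B$, and it remains to control each term.

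For the term $A$: since $I(Y;X_i,X_S)$ is constant, $A = H(Y\mid f_i(X_i),Z) - H(Y\mid X_i,X_S)$, and by Gibbs' inequality (nonnegativity of KL divergence) $H(Y\mid f_i(X_i),Z) = -\E\big[\log p(Y\mid f_i(X_i),Z)\big] \le -\E\big[\log q_{\theta_{i,S}}(Y\mid f_i(X_i),Z)\big]$, the slack being the expected KL divergence from the surrogate to the true conditional. Thus $A$ is bounded above by $-\E\big[\log q_{\theta_{i,S}}(Y\mid f_i(X_i),Z)\big]$ plus the mapping-independent constant $-H(Y\mid X_i,X_S)$ --- exactly the mechanism behind Theorem~\ref{theorem:mi_lower_bd}. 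For the term $B = I(Y;X_S) - I(Y;Z)$, I would use only the elementary facts $I(Y;Z)\ge 0$ and $I(Y;X_S)\le I(Y;X_i,X_S)$, so that $B$ is dominated by the mapping-independent constant $I(Y;X_i,X_S)$. Finally, to recover both terms of the objective in~\eqref{eq:cmi_opt}, observe that $-\E\big[\log q_{\theta_i}(Y\mid f_i(X_i))\big] \ge H(Y\mid f_i(X_i)) \ge H(Y\mid X_i)$ (Gibbs' inequality, then the data-processing inequality), so adding the nonnegative quantity $-\E\big[\log q_{\theta_i}(Y\mid f_i(X_i))\big] - H(Y\mid X_i)$ to the bound only enlarges it. Collecting the pieces, the estimation error is bounded above by a mapping- and surrogate-independent constant minus the objective in~\eqref{eq:cmi_opt}; maximizing~\eqref{eq:cmi_opt} therefore minimizes this upper bound, which is exactly the assertion.

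The step I expect to be the main obstacle is the term $B = I(Y;X_S) - I(Y;F(X_S))$. Because processing the conditioning variable $X_S\mapsto F(X_S)$ may make a conditional mutual information either increase or decrease, $B$ --- unlike the mutual-information gap $A$, which the likelihood term $q_{\theta_{i,S}}$ pins down --- cannot be driven to zero by the stated objective, which forces the crude constant bound above; a sharper argument would observe that $B$ is itself small whenever each $f_j$ ($j\in S$) is approximately information-preserving, a property enforced by the corresponding likelihood terms in the full shared objective of Subsection~\ref{subsec:block_dropout}. A second point to verify carefully is that folding the first likelihood term into the bound preserves the inequality direction; this is immediate from $H(Y\mid f_i(X_i)) \ge H(Y\mid X_i)$ but should be stated explicitly. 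Finally, I would note that setting $S=\emptyset$ collapses $B$ to zero and reduces the whole argument to that of Theorem~\ref{theorem:mi_lower_bd}.
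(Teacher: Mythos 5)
Your skeleton---chain rule to write both CMIs as differences of two mutual informations, triangle inequality, then the data-processing inequality to drop the absolute values---is exactly the paper's, and your treatment of $A=I(Y;X_i,X_S)-I(Y;f_i(X_i),F(X_S))$ via Gibbs' inequality with the surrogate $q_{\theta_{i,S}}$ is precisely the paper's invocation of Theorem~\ref{theorem:mi_lower_bd} on that term. Where you diverge is the term $B=I(Y;X_S)-I(Y;F(X_S))$: the paper applies Theorem~\ref{theorem:mi_lower_bd} to $B$ as well, i.e., it controls $B$ by the expected log-likelihood of a surrogate for $p(Y\,|\,F(X_S))$, so that \emph{both} summands of the error are tied to likelihood terms. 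You instead absorb $B$ into a mapping-independent constant and then pad the bound with the nonnegative slack $-\E[\log q_{\theta_i}(Y|f_i(X_i))]-h(Y|X_i)$ so that the displayed objective reappears. This does produce a valid inequality of the form ``constant minus objective,'' so the literal claim is proved, but the resulting bound is vacuous on $B$ (any nonnegative functional of the mappings could be appended in the same way), whereas the paper's bound is not. The obstacle you flag in your closing paragraph is real, and it points at a genuine mismatch in the paper itself: applying Theorem~\ref{theorem:mi_lower_bd} to $B$ yields the term $\E[\log q_{\theta_S}(Y|F(X_S))]$, not the term $\E[\log q_{\theta_i}(Y|f_i(X_i))]$ appearing in \eqref{eq:cmi_opt}. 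The stated objective lines up with the paper's proof only if either the first likelihood term conditions on $F(X_S)$, or the quantity being bounded is $|I(Y;X_S|X_i)-I(Y;F(X_S)|f_i(X_i))|$, for which the chain rule produces $I(Y;X_i)-I(Y;f_i(X_i))$ as the second summand and $q_{\theta_i}(Y|f_i(X_i))$ is then the right surrogate. Your remark that $B$ is in any case controlled by the likelihood terms for the indices $j\in S$ in the shared objective of Subsection~\ref{subsec:block_dropout} is the correct way to reconcile the statement with a non-vacuous bound; making that explicit would turn your workaround into the intended argument.
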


% Therefore, \eqref{eq:mi_opt} and \eqref{eq:cmi_opt} give maximum likelihood objectives for estimating single MI and CMI values, respectively. 

For any CI test in Algorithm \ref{alg:markov_blanket}, we need to solve an optimization problem in the form of \eqref{eq:mi_opt} or \eqref{eq:cmi_opt} over a separate set of mappings and surrogate distributions.

\begin{example}\label{example:two_features}
In the case of $m=2$, we need to maximize 
\begin{itemize}[leftmargin=3mm]
    \item \hspace{-1mm}$\E[\log q_{\theta_1}(Y|f_1(X_1))]$ over $f_1$ and $\theta_1$,
    \item \hspace{-1mm}$\E[\log q_{\theta_2}(Y|f_2(X_2))]$ over $f_2$ and $\theta_2$,
    \item \hspace{-1mm}$\E[\log q_{\theta_1'}(Y|f_1'(X_1)) + \log q_{\theta_{1,\{2\}}}\!(Y|f_1'(X_1),f_2'(X_2))]$ over $f_1'$, $f_2'$, $\theta_1'$, and $\theta_{1,\{2\}}$, and
    \item \hspace{-1mm}$\E[\log q_{\tilde{\theta}_2}(Y|\tilde{f}_2(X_2)) + \log q_{\theta_{2,\{1\}}}\!(Y|\tilde{f}_2(X_2),\tilde{f}_1(X_1))]$ over $\tilde{f}_1$, $\tilde{f}_2$, $\tilde{\theta}_2$, and $\theta_{2,\{1\}}$.
\end{itemize}
Evidently, 6 different mappings and 6 different sets of parameters are required.
\end{example}

For general $m$, when estimating all CMI values of the form $I(Y;f_i(X_i)|F(X_S))$, $2^{m-1}$ individual mappings $f_i$ are needed for $X_i$ when there is no limit on the degree, each corresponding to a conditioning set $S\subseteq[m]\setminus {i}$. Note that separate mappings are required for the conditioning sets as well. Moreover, a similar number of conditional distributions (in the form of $q_{\theta_i}$ and $q_{\theta_{i,S}}$) are required. This is clearly intractable, even for small $m$. 

A naive solution to this issue is to maximize the likelihood of only a single surrogate distribution $q_{\theta}(Y|F(V))$ conditioned on all of the feature mappings.
However, consider the following example. Suppose $X_1$ and $X_2$ are identical copies containing the same information about $Y$, such that $I(Y;X_1) = I(Y;X_2) > 0$. Using the naive approach, $f_1$ and $f_2$ are not guaranteed to produce $f_1(X_1)$ and $f_2(X_2)$ satisfying $I(Y;X_1) = I(Y;f_1(X_1)) = I(Y;f_2(X_2))$. This is due to the fact that the best predictor of $Y$ given both $X_1$ and $X_2$ is no better than the best predictor of $Y$ given $X_1$ only. Suppose $f_2\equiv 0$ and $f_1(X_1)=X_1$; no predictor given both $X_1$ and $X_2$ is strictly better than the optimal predictor given $f_1(X_1)$ and $f_2(X_2)$, yet $I(Y;f_2(X_2)) = 0$. Note that the error in estimating $I(Y;X_2)$ also propagates to the estimation of $I(Y;X_1|X_2)$.  This example demonstrates that, while redundancy in the input variables need not be retained for prediction, any redundancy should be retained for CMI estimation. We propose an approach to circumvent the exponential nature of the problem in the next subsection.

\vspace{-2mm}
\subsection{Parameter Sharing and Block-Dropout}\label{subsec:block_dropout}
\vspace{-1mm}

We make two approximations to address the exponential number of required mappings and surrogate conditional distributions, respectively. First, we learn a single set of mappings $f_1$,...,$f_m$ which are shared for all CMI estimations. A multi-objective optimization is required for learning the shared mappings jointly, where the objectives are in the form of \eqref{eq:mi_opt} or \eqref{eq:cmi_opt}. For any subset $S\subseteq [m]$, let $q_{\theta_S}(Y|F(X_S))$ be a surrogate for $p(Y|F(X_S))$. We minimize the weighted sum
\begin{equation}\label{eq:objective_weighted}
    \max_{\substack{
        \theta_S : S\subseteq [m]
        \\ 
        f_i:i\in [m]}}\,\,
        \sum_{S\subseteq [m]}
        P_S \cdot \E \big[\log q_{\theta_S} (Y | F(X_S))\big],
\end{equation}
where $P_S$ is a discrete distribution over $S$.

Although the optimization \eqref{eq:objective_weighted} eliminates the need for having multiple mappings for each feature, an exponential number of surrogate conditional distributions still needs to be estimated.
We propose a randomization-based implementation to circumvent this issue. Our method enables us to share a single set of parameters $\theta$ among all of the surrogate conditional distributions. 
More specifically, 
%The second approximation we make addresses the large number of required surrogate conditional distributions in \eqref{eq:objective_weighted}. We share a single set of parameters $\theta$ among all of the surrogates of the form $q_{\theta_S}$ in the summation. 
let $W=[W_1\cdots W_m]$ be a binary random vector of length $m$. Also, denote $F_{W}(V)=\{W_i\cdot f_i(X_i) : i\in [m]\}$. For instance, in the case of $m=2$, 
$F_W(V)=\{W_1f_1(X_1), W_2f_2(X_2)\}$. Note that $W$ is a binary mask for the feature mappings. We propose the following optimization for finding the mappings:
\begin{equation}
\label{eq:objective_dropout}
    \max_{\theta, f_i:i\in [m]}\,\,
    \sum_{w\in \{0,1\}^m}
    P_w \cdot \E \big[\log q_\theta (Y|F_{w}(V), w)\big].
\end{equation}

For every feature subset $S$, there is a corresponding $W$ such that $W_i$ equals 1 if $i\in S$, and 0 otherwise. For this choice of $S$ and $W$, we take $q_\theta(Y|F_{W}(V),W)$ to be the surrogate of $p(Y|F(X_S))$ and set $P_W=P_S$ in \eqref{eq:objective_dropout}.

We next introduce a dropout-based implementation of the program \eqref{eq:objective_dropout}, which we refer to as block-dropout. 
% We parameterize $q_\theta$ using a neural network with inputs $F_W(V)$ and $W$. 
The distribution $q_\theta$ is parameterized by a neural network which takes $F_W(V)$ and $W$ as inputs. In order to obtain samples of $F_W(V)$, for each sample of $F(V)$, a subset of its feature mappings are randomly dropped out, i.e., set to zero, according to the discrete distribution $P_W$. Note that if an individual feature mapping is vector-valued, the entire block of entries corresponding to that mapping are set to zero. Figure \ref{fig:learning_framework} illustrates the proposed approach for learning the feature mappings.

Our proposed approach is reminiscent of dropout regularization, which was originally proposed as a computationally efficient approach for approximate model averaging \citep{SrivastavaHKSS2014}. In fact, our approach is equivalent to a dropout layer applied to the feature mappings when $W$ is chosen to be a random vector with i.i.d. Bernoulli entries and the feature mappings are scalar.
Unlike dropout, block-dropout removes entire feature mappings at a time. Moreover, we instead choose $W$ to be uniformly distributed over length-$m$ binary vectors with number of ones between 1 and $\Delta+1$, due to \eqref{eq:cmi_opt} and the fact that $\Delta$ is the largest conditioning set size in Algorithm \ref{alg:markov_blanket}.

\begin{figure}[t]
%  trim={<left> <lower> <right> <upper>}
\includegraphics[width=0.44\textwidth]{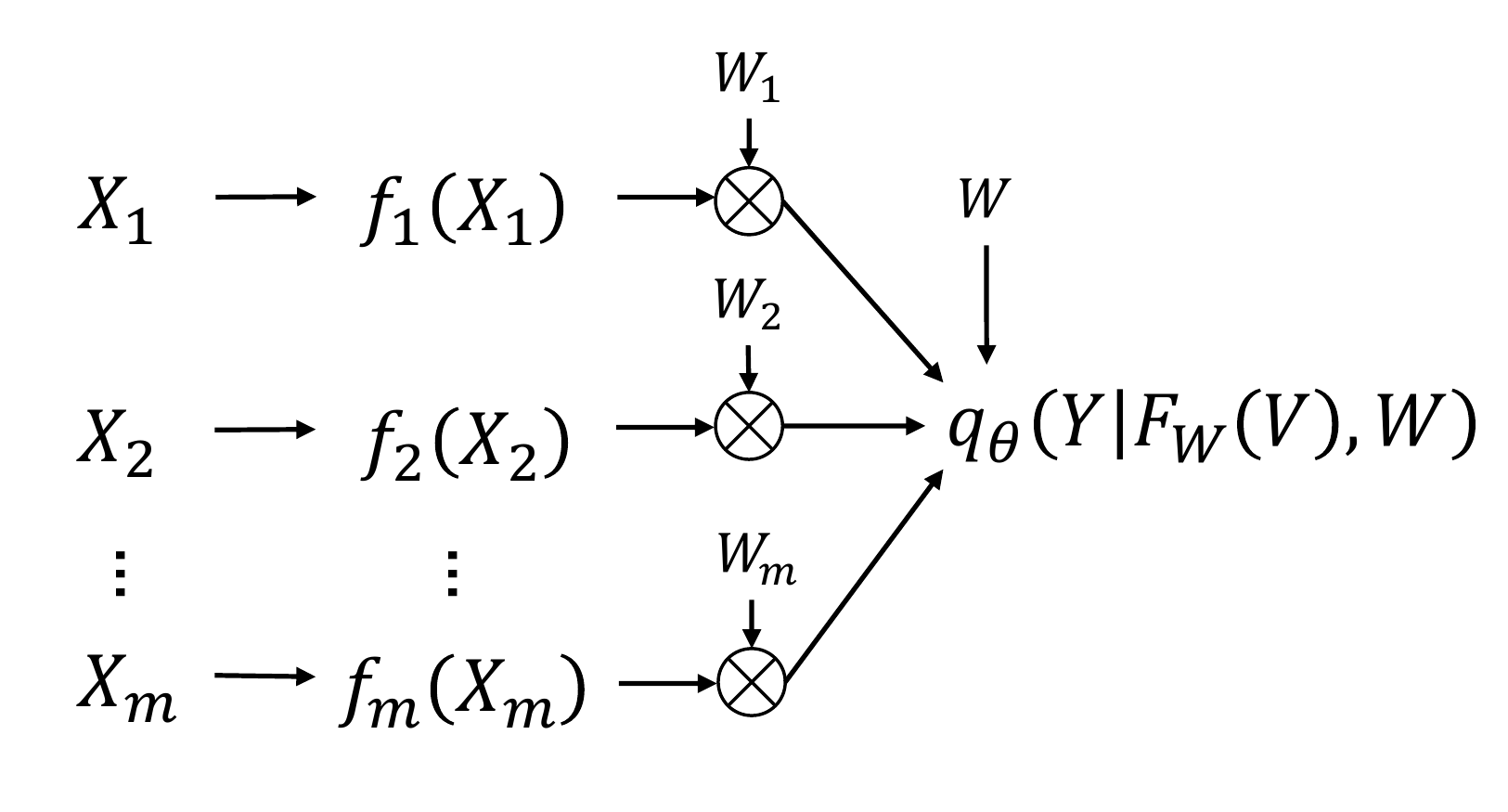}
\centering
\vspace{-3mm}
\caption{Proposed approach for learning the feature mappings. $X_1,...,X_m$ are individually mapped to representations $f_1(X_1),...,f_m(X_m)$. Each mapping $f_i(X_i)$ is dropped with some probability (multiplied with binary $W_i$). The conditional distribution $q_\theta(Y|F_W(V),W)$ is parameterized by the remaining mappings.}
\vspace{-3mm}
\label{fig:learning_framework}
\end{figure}

In the following section, we propose a regularization term that, when added to the objective \eqref{eq:objective_dropout}, ensures that non-parametric estimators such as $k$-NN perform well when given mapped data samples of $F(V)$.

\vspace{-2mm}
\section{INFORMATION EFFICIENCY}\label{sec:info_efficiency_knn}
\vspace{-1mm}

The second step of our approach employs a non-parametric CMI estimator on samples of the learned feature mappings for CI testing. We focus on the $k$-NN approach, although the following discussion is generalizable to other non-parametric estimators which are based on local density estimates around each data point. 

\citet{KraskovSG2004} introduced the $k$-NN MI estimator for the continuous case as an improvement over methods that combine quantization and discrete estimators, which are known to have non-negligible bias \citep{Paninski2003}. The $k$-NN MI estimator was based on the widely-adopted $k$-NN estimator of differential entropy \citep{KozachenkoL1987}, and was later extended to CMI \citep{FrenzelP2007}. In the $k$-NN approach, the joint density is assumed to be constant in a hypercube centered around each data point. The diameter of this hypercube is determined by the point's $\ell_\infty$ distance to its $k$-nearest neighbor, and the sample complexity of the $k$-NN method depends on the rate at which $k$-NN distances shrink as sample size grows \citep{GaoOP2018}. The faster $k$-NN distances converge to zero, the better the sample complexity. 

While the $k$-NN estimator is consistent under mild assumptions \citep{GaoSOP2017,GaoOP2018}, it can exhibit arbitrarily high sample complexity due to its reliance on $\ell_\infty$ distance. Notably, $k$-NN is known to under-perform for distributions with bounded support, a phenomenon known as \emph{boundary bias}. Intuitively, for a point on the boundary of the support, the distance to its $k$-th nearest neighbor shrinks more slowly than the distance for a point in the interior of the support as the sample size grows. As a result, for a fixed $k$, densities on the boundaries tend to be systematically under-estimated.

Boundary bias is an example of a fundamental issue with the $k$-NN approach: the estimation of the density at a point is not accurate unless 
the point is surrounded by sufficiently many similar points. 
% there are sufficiently many \emph{similar} data points around that point.
To resolve this issue, we propose adding a regularization term to the likelihood objective \eqref{eq:objective_dropout} that ensures that the mappings are suitable for the $k$-NN CMI estimator. 
%(improve the performance of the subsequent $k$-NN CMI estimation.)
The main idea is that mapped samples should share similar information about the target variable if and only if they are close in Euclidean distance.

We formalize this intuition by introducing the concept of information efficiency defined as follows. For this definition we use the notion of the Jeffreys divergence \citep{Jeffreys1948} between two distributions $p$ and $q$, which is defined as 
\begin{equation}\label{eq:def_sym_KL}
    D_J(p\|q) = \frac{1}{2}D(p\|q) + \frac{1}{2}D(q\|p),
\end{equation}
where $D(\cdot\|\cdot)$ denotes the KL-divergence.

\begin{definition}[Information Efficiency]\label{def:info_efficiency}
$X$ is \emph{information-efficient} with respect to $Y$ if for any $\epsilon>0$ there exists a $\delta>0$ such that for any realizations $x$ and $x'$ of $X$, $\|x - x'\| < \delta$ if and only if $D_J(p(Y|x)\|p(Y|x')) < \epsilon$, where $\|\cdot\|$ is any norm on the domain of $X$.
\end{definition}

\begin{algorithm}[t]
{\fontsize{10}{16}\selectfont
\begin{algorithmic}[1]
\STATE {\bfseries input:} batch size $n$, structure of $f_1,...,f_m$ and $q_\theta$
\FOR {number of training iterations}
    \STATE{Obtain $n$ data samples $\{(y^{(k)},v^{(k)})\}_{k=1}^n$}
    \STATE{Obtain $n$ samples of $W$: $\{w^{(k)}\}_{k=1}^n$}
    \STATE $\mathcal{J} \gets \frac{1}{n}\sum_{k=1}^n \log q_\theta(y^{(k)}|F_{w^{(k)}}(v^{(k)}),w^{(k)})$
    \FOR {$i=1$ to $m$}
        \STATE $\{(v_i')^{(k)}\}_{k=1}^n\gets \{v^{(k)}\}_{k=1}^n$
        \STATE $\{(v_i'')^{(k)}\}_{k=1}^n\gets \{v^{(k)}\}_{k=1}^n$
        \STATE Shuffle $x_i$ samples of $\{(v_i'')^{(k)}\}_{k=1}^n$
        \FOR {$k$ in $1,...,n$}
            \IF {$w_i^{(k)} = 1$}
                \STATE $(q_i')^{(k)}\gets q_\theta(y^{(k)}|F_{w^{(k)}}((v')^{(k)}),w^{(k)})$
                \STATE $(q_i'')^{(k)}\gets q_\theta(y^{(k)}|F_{w^{(k)}}((v'')^{(k)}),w^{(k)})$
                \STATE $D_i^{(k)}\gets D_J((q_i')^{(k)}\|(q_i'')^{(k)})$
                \STATE $d_i^{(k)}\gets \|f_i((x_i')^{(k)})-f_i((x_i'')^{(k)})\|_2^2$
            \ENDIF
            \STATE $\mathcal{J} \gets \mathcal{J} - \lambda\frac{1}{n} \big\vert d_i^{(k)} - D_i^{(k)}\big\vert$
        \ENDFOR
    \ENDFOR
    \STATE Update $\theta$ and $f_1,...f_m$ to maximize $\mathcal{J}$
\ENDFOR
\STATE \textbf{return} learned mappings $f_1,...,f_m$
\end{algorithmic}}
\caption{Feature Mapping Learning}
\label{alg:training_procedure}
\end{algorithm}

The word ``efficiency'' is used to emphasize that Definition \ref{def:info_efficiency} is a condition for how efficiently the $k$-NN method uses the data points. Section \ref{sec:experiments} provides an example that shows that the sample complexity of $k$-NN for estimating $I(X;Y)$ can be improved by replacing samples of $X$ with those of $f(X)$, where $f(X)$ is both information efficient and a sufficient statistic. We chose Jeffreys divergence as a distance measure between distributions because it is symmetric and can be computed analytically in many cases.

% %%%%%%%%%%%%%%%%%%%%%%%%%%%%%%%%%%%%%%%%%%%%%%%%%%%%%%%%
\begin{figure*}[t]
\centering
\begin{subfigure}{.25\textwidth} 
  \centering
  %  trim={<left> <lower> <right> <upper>}
  \includegraphics[width=.73\textwidth]{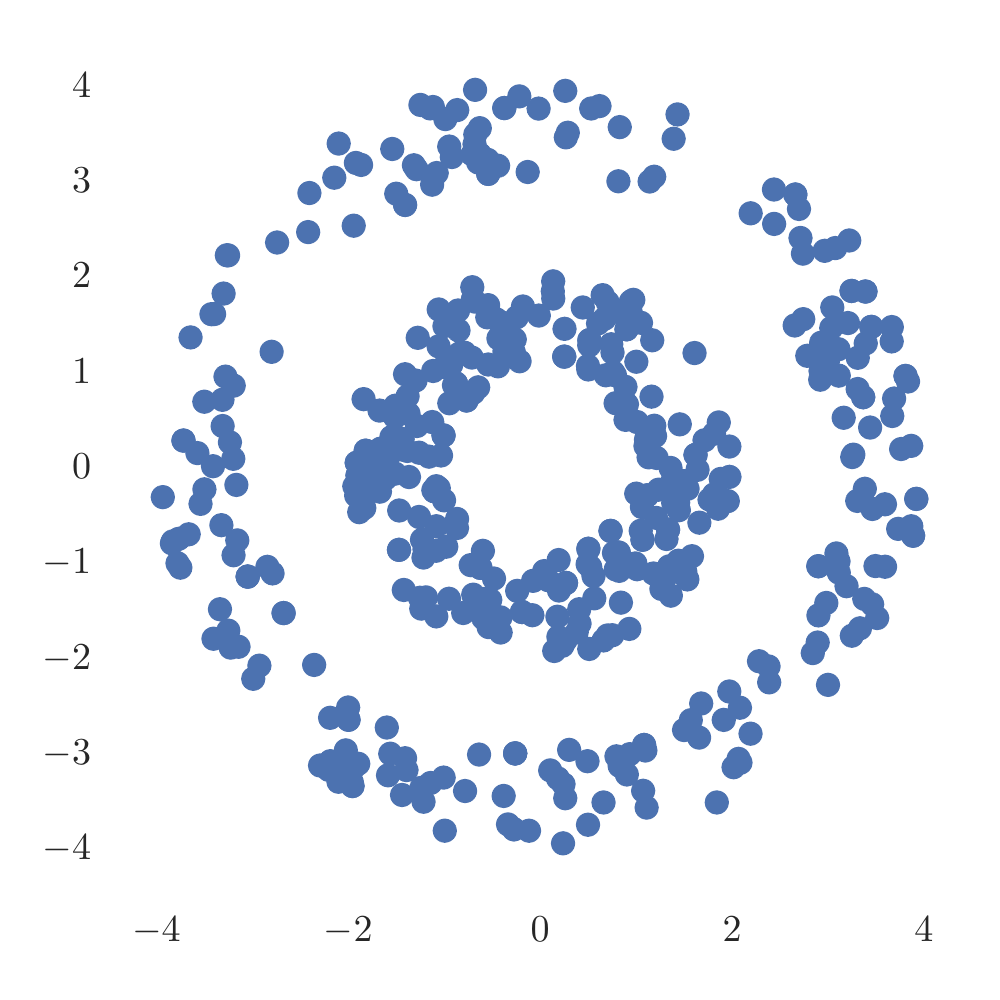}
%   \vspace{-2mm}
  \caption{500 samples of $X$}
  \label{fig:bullseye_data}
\end{subfigure}
\begin{subfigure}{.25\textwidth}
  \centering
  \includegraphics[width=0.85\textwidth]{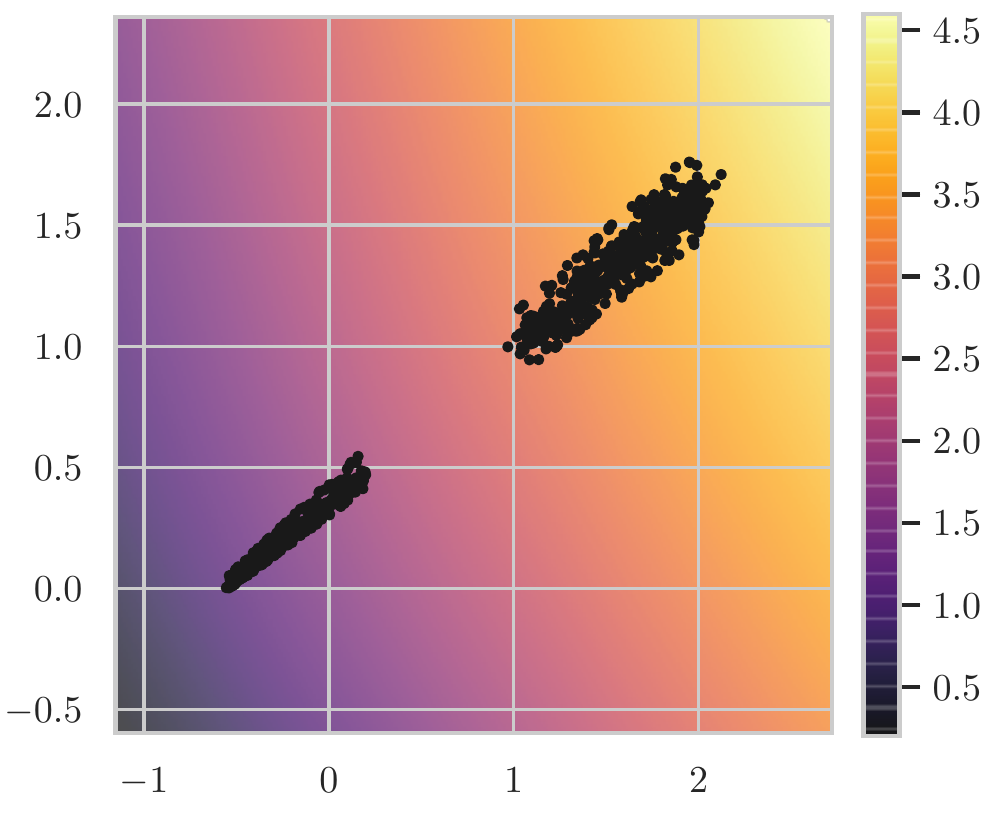}
  \caption{Samples of $f_{\text{regularized}}(X)$}
  \label{fig:z_regularized}
\end{subfigure}
\begin{subfigure}{.25\textwidth}
  \centering
  \includegraphics[width=0.8\textwidth]{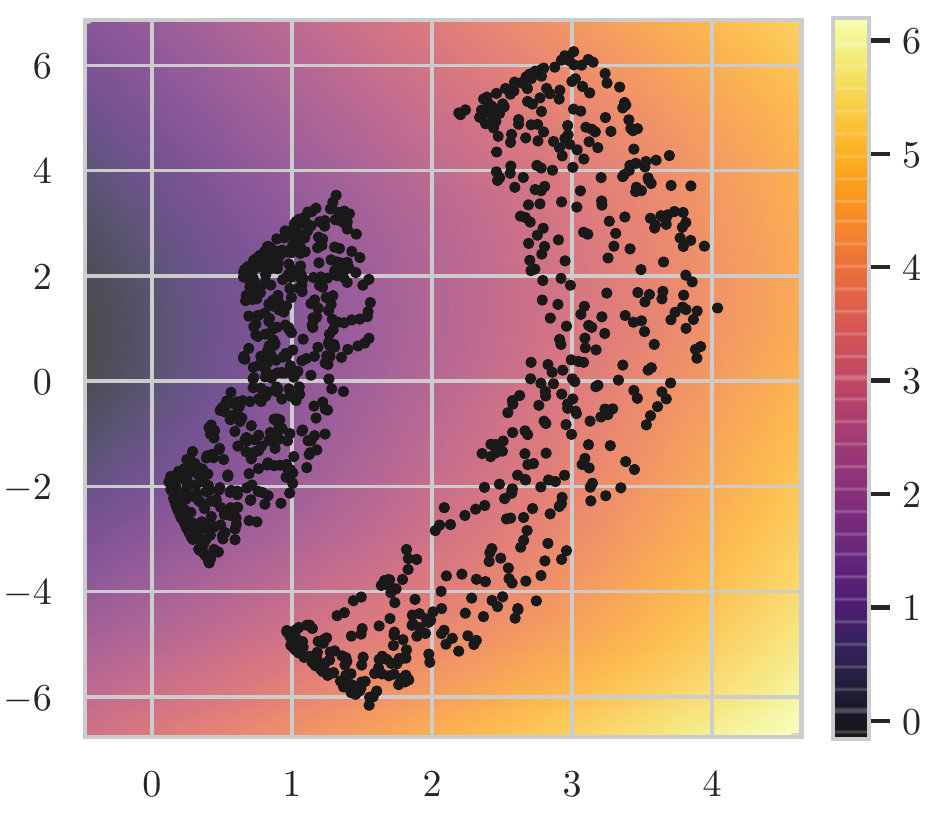}
  \caption{Samples of $f_{\text{nominal}}(X)$}
  \label{fig:z_nominal}
\end{subfigure}
\vspace{-2mm}
\caption{Bullseye dataset ($\epsilon=0.3$). (a) illustrates the support of $X$. Mapped samples of $X$ produced by the proposed method with (b) and without (c) the proposed regularization are plotted over heatmaps of $\E_{q_\theta}[Y|f_{\text{regularized}}(X)]$ and $\E_{q_\theta}[Y|f_{\text{nominal}}(X)]$, respectively. Samples of $f_{\text{regularized}}(X)$ which lie on constant contours are mapped close together.
}
\vspace{-3mm}
\end{figure*}
% %%%%%%%%%%%%%%%%%%%%%%%%%%%%%%%%%%%%%%%%%%%%%%%%%%%%%%%%

To produce statistics $f_1(X_1),...,f_m(X_m)$ which exhibit the information efficiency property with respect to $Y$, we propose a 
regularization term that maps dissimilar mapped samples apart while pulling similar samples together. For each feature $X_i$, let $X'_i$ and $X_i''$ be two i.i.d. random samples of $X_i$ and define the modified feature sets
\begin{equation*}
\begin{aligned}
    V_i'  &\vcentcolon= \{X_1,...,X_{i-1},X_i',X_{i+1},...,X_m\},\\
    V_i'' &\vcentcolon= \{X_1,...,X_{i-1},X_i'',X_{i+1},...,X_m\}.
\end{aligned}
\end{equation*}
Furthermore, we define the conditional distributions
$q_i' \coloneqq q_\theta(Y|F_W(V_i'),W)$ and $q_i'' \coloneqq q_\theta(Y|F_W(V_i''),W)$. For each $i$, consider the regularization term
\begin{equation*}
    R_i = \Big\vert \|f_i(X_i')-f_i(X''_i)\|_2^2 - D_J(q_i'\|q_i'') \Big\vert.
\end{equation*}
Minimizing $R_i$ encourages $f_i$ to exhibit the information efficiency property with respect to $Y$. We minimize
% Minimizing $R_i$ encourages $f_i$ to map dissimilar inputs $X_i'$ and $X_i''$ apart while pulling similar points together. We minimize
\begin{equation}\label{eq:regularization}
    R = \sum_{i=1}^m W_i R_i, %\mathds{1}[D_J(q_i'\|q_i'') < \eta],
\end{equation}
where $W_i$ is the $i$-th entry of the block-dropout variable $W$.
% , $\mathds{1}$ denotes the indicator function and $\eta$ is a margin hyperparameter. 
For each $i$, $R_i$ is minimized only if $W_i=1$ ($X_i$ is not dropped out).
% and the Jeffrey's divergence $D_J(q_i'\|q_i'')$ is below a threshold. The threshold $\eta$ ensures that the regularization is only applied locally.
In summary, we maximize
\begin{equation}
\label{eq:objective_dropout_regularized}
    \max_{\theta, f_i:i\in [m]}\,\, \E \big[\log q_\theta(Y|F_W(V),W) - \lambda R \big],
\end{equation}
where $\lambda$ is a regularization hyperparameter and the expectation is taken over $Y$, $V$, $W$, and $X_i'$, $X_i''$ for all $i$. In practice, for each $i$ we obtain minibatch samples of $X_i'$ and obtain $X_i''$ by shuffling the samples of $X_i'$. The learning algorithm for optimizing \eqref{eq:objective_dropout_regularized} is presented in Algorithm \ref{alg:training_procedure}. Our regularization technique is conceptually related to contrastive loss functions for representation learning, e.g. \citet{HadsellCL2006}. However, \eqref{eq:regularization} also serves to separate points containing dissimilar information, a characteristic important for $k$-NN performance.

\vspace{-2mm}
\section{EXPERIMENTS}\label{sec:experiments}
\vspace{-1mm}

In this section, we evaluate our proposed method on both synthetic and real data. For all experiments, we use the $k$-NN estimator in the second step of our proposed approach. In the case that the target variable $Y$ is discrete, we use a modification of the $k$-NN approach for discrete-continuous mixtures \citep{GaoSOP2017}. Additional details regarding the experiments can be found in the Supplementary Materials.

% %%%%%%%%%%%%%%%%%%%%%%%%%%%%%%%%%%%%%%%%%%%%%%%%%%%%%%%%
\begin{figure*}[t]
\centering
\begin{subfigure}{.35\textwidth} 
  \centering
  %  trim={<left> <lower> <right> <upper>}
  \includegraphics[width=\textwidth]{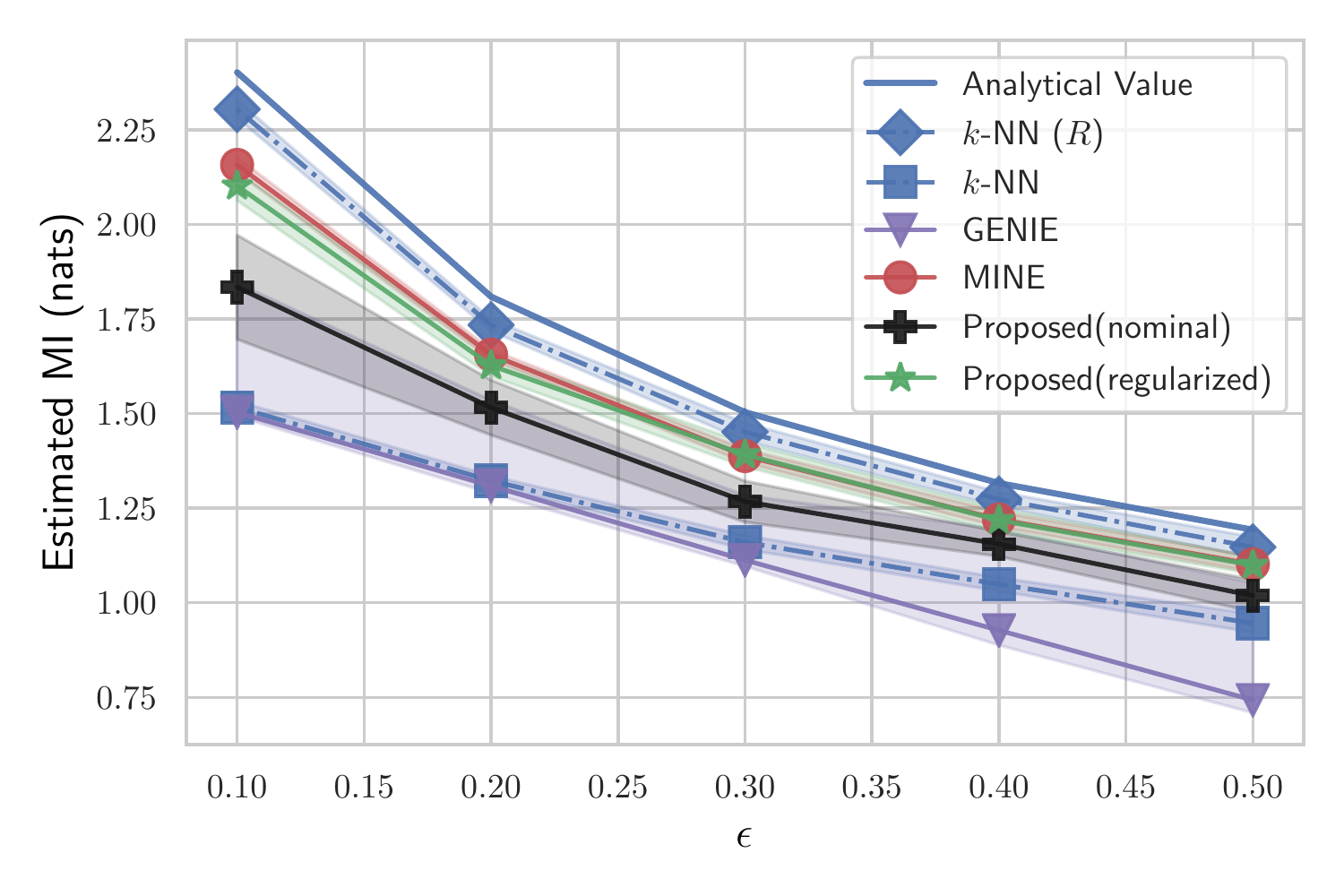}
%   \vspace{-2mm}
  \vspace{-2mm}
  \caption{Performance vs. $\epsilon$ for sample size 2000}
  \label{fig:bullseye_performance_epsilon}
\end{subfigure}
\begin{subfigure}{.35\textwidth}
  \centering
  \includegraphics[width=\textwidth]{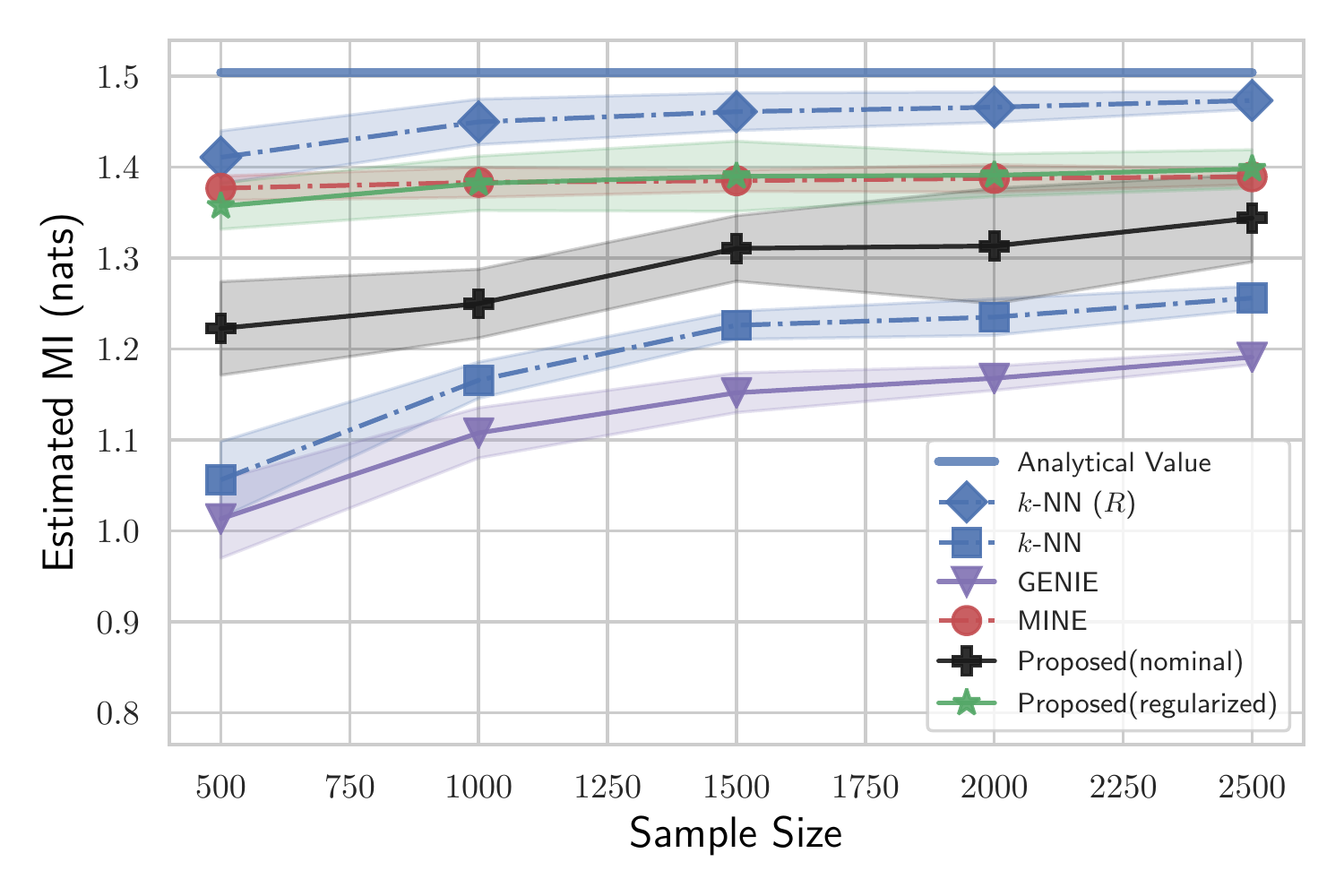}
  \vspace{-2mm}
  \caption{Performance vs. sample size $n$}
  \label{fig:bullseye_performance_n}
\end{subfigure}
\vspace{-2mm}
\caption{Performance comparison of MI estimators on the 2D Bullseye Dataset.}
\vspace{-3mm}
\end{figure*}
% %%%%%%%%%%%%%%%%%%%%%%%%%%%%%%%%%%%%%%%%%%%%%%%%%%%%%%%%

\vspace{-2mm}
\subsection{2D Bullseye Dataset}\label{subsec:2d_bullseye}
\vspace{-1mm}

Consider random variables $R\sim\text{Unif}\{[1,2]\cup[3,4]\}$, $\Theta\sim\text{Unif}[0,2\pi]$, and $N\sim\text{Unif}[-\epsilon, \epsilon]$ for $0\leq\epsilon\leq 0.5$. Let $X=(R\cos{\Theta},R\sin{\Theta})$ and $Y=R+N$. Samples of $X$ are plotted in Figure \ref{fig:bullseye_data} for $\epsilon=0.3$; as seen in the figure, the support of $X$ resembles a bullseye in $\R^2$. $X$ is not information efficient with respect to $Y$, since $Y$ only depends on the magnitude of $X$. Therefore, we would expect that non-parametric estimators, such as $k$-NN, systematically underestimate the mutual information $I(X;Y)$. We now evaluate the ability of the regularized objective \eqref{eq:objective_dropout_regularized} to learn an information efficient mapping that improves subsequent $k$-NN MI estimation.

\paragraph{Learning an Information Efficient Mapping.}
Observe that $X$ does not satisfy the information efficiency property with respect to $Y$, since any two $X$ values with the same radial distance from the origin carry the same information about $Y$. Furthermore, the geometry of the support of $X$ introduces a boundary bias. On the other hand, the statistic $R=\|X\|_2$ is information efficient with respect to $Y$, and $I(X;Y)=I(R;Y)$. This value of this MI is derived in the Supplementary Materials. 

% The learned statistic $Z$ is 2-dimensional, and $q_{Y|Z}$ is chosen to be a unit-variance normal with mean $g(Z)$. 
In order to compare the effect of the proposed regularization term \eqref{eq:regularization}, we learn two mappings $f_{\text{regularized}}$ and $f_{\text{nominal}}$ by optimizing \eqref{eq:objective_dropout_regularized} and \eqref{eq:objective_dropout} respectively, i.e., $f_{\text{nominal}}$ is learned by optimizing only the likelihood term. The surrogate distribution $q_\theta(Y|F_W(V),V)$ is chosen to be a normal distribution parameterized by three-layer feedforward neural networks, the mappings are also learned using three-layer feedforward neural networks, and the regularization coefficient in \eqref{eq:objective_dropout_regularized} was chosen to be 0.1.
Mapped samples are shown in Figures \ref{fig:z_regularized} and \ref{fig:z_nominal}, respectively. In both cases, mapped samples are plotted over a heatmaps of $\E_{q_\theta}[Y|f_{\text{regularized}}(X)]$ and $\E_{q_\theta}[Y|f_{\text{nominal}}(X)]$, respectively.\footnote{Note that since block-dropout is not applicable for the MI estimation task, the binary mask $W$ is omitted.}
Intuitively, the proposed regularization term pulls points along constant contours together. Unlike the samples of $f_{\text{nominal}}(X)$, samples of $f_{\text{regularized}}(X)$ are clustered around a single line resembling an embedding of $R$ in $\R^2$.

% %%%%%%%%%%%%%%%%%%%%%%%%%%%%%%%%%%%%%%%%%%%%%%%%%%%%%%%%
\begin{figure*}[t]
\centering
\begin{subfigure}{.4\textwidth} 
  \centering
  %  trim={<left> <lower> <right> <upper>}
  \includegraphics[width=0.5\columnwidth]{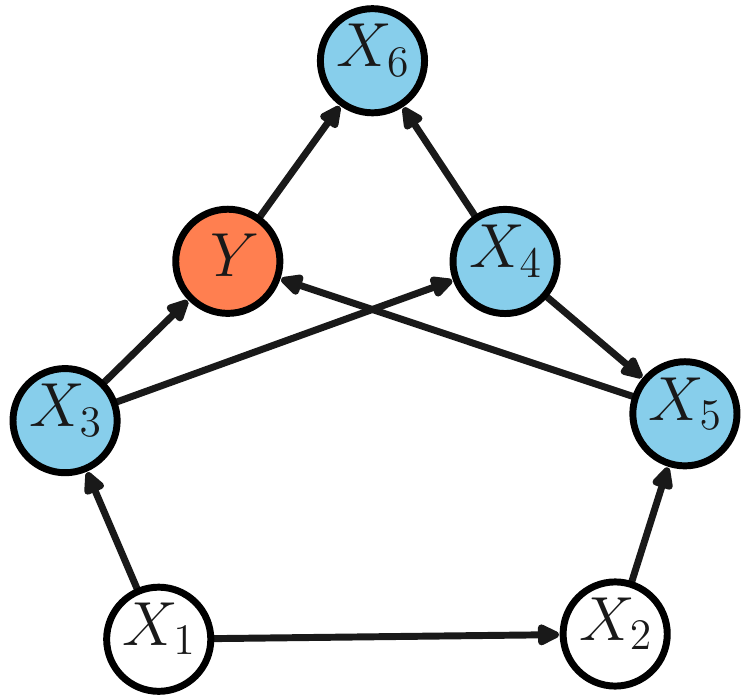}
%   \vspace{-2mm}
  \vspace{-2mm}
  \caption{Graphical representation of the target variable $Y$ and its Markov blanket: $X_1,X_3,X_5$, and $X_6$.}
  \label{fig:ci_test_dag}
\end{subfigure}
\begin{subfigure}{.4\textwidth}
  \centering
  \includegraphics[width=0.8\textwidth]{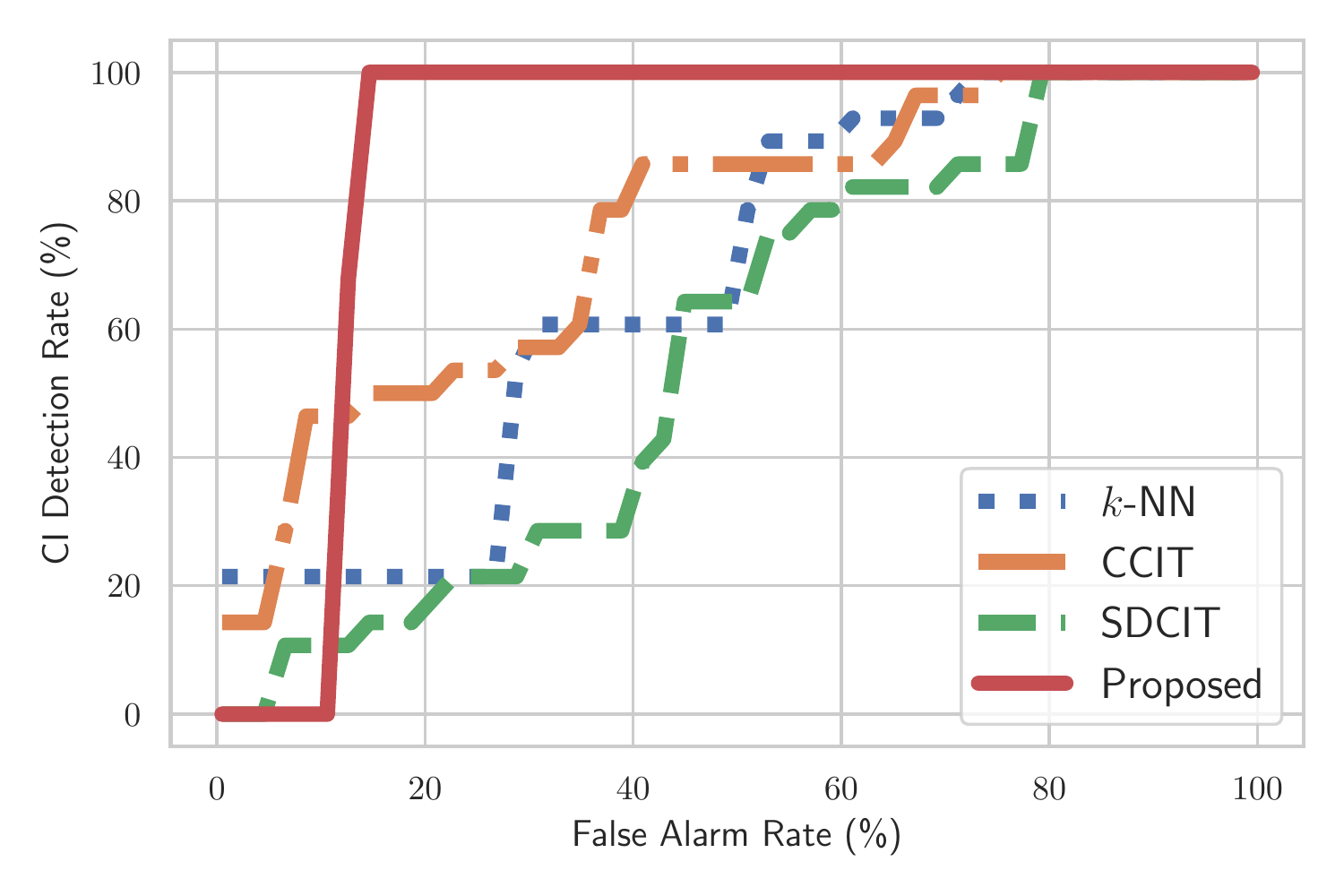}
  \vspace{-2mm}
  \caption{Performance comparison of CI tests.}
  \label{fig:ci_test_roc}
\end{subfigure}
\vspace{-2mm}
\caption{CI Testing experiment with the 3D Bullseye Dataset.}
% \vspace{-3mm}
\end{figure*}
% %%%%%%%%%%%%%%%%%%%%%%%%%%%%%%%%%%%%%%%%%%%%%%%%%%%%%%%%

\paragraph{Estimation of $\boldsymbol{I(X;Y)}$.}
We now compare the performance of five estimators of $I(X;Y)$: the $k$-NN method \citep{KraskovSG2004}, the weighted kernel density estimator GENIE \citep{MoonKH2017}, the model-based MINE estimator \citep{BelghaziIASOYCH2018}, and the proposed method, with (regularized) and without (nominal) the proposed regularization term \eqref{eq:regularization}. In addition, we also compare with the the performance of the $k$-NN estimator given samples of $R$ instead of samples of $X$, which is denoted by $k$-NN $(R)$. Figures \ref{fig:bullseye_performance_epsilon} and \ref{fig:bullseye_performance_n} compare the performance of those estimators across different values of $\epsilon$ and sample size, respectively.

Due to the generating model of $Y$, $R$ is an information efficient statistic for estimating the MI, and indeed $k$-NN $(R)$ consistently gives the best performance. Due to the geometry of the distribution, the two non-parametric estimators $k$-NN and GENIE systematically underestimate the MI. The proposed method improves the performance of $k$-NN, especially with the regularization term. The performance improvement can be explained by Figures \ref{fig:z_regularized} and \ref{fig:z_nominal}. Samples of $f_{\text{regularized}}$ and $f_{\text{nominal}}$ containing similar information about $Y$ under the surrogate distribution are clustered more closely together, although this effect is more pronounced in the regularized case.

The proposed (regularized) method performs comparably to the model-based MINE method, suggesting that the gap in performance between $k$-NN and model-based estimators can be bridged using a learned information-efficient statistic. For this problem, knowledge of the joint distribution of $X$ and $Y$ can be used to find a statistic $R$ that improves $k$-NN performance analytically. For the general case, our proposed method provides a framework for learning an appropriate statistic.

\vspace{-2mm}
\subsection{3D Bullseye Dataset}\label{subsec:bullseye_cit}
\vspace{-1mm}

This experiment evaluates the proposed method for CI testing. Consider the graphical model in Figure \ref{fig:ci_test_dag}. The target variable $Y$ and features $X_1,...,X_6$ were generated according to the following generalization of the bullseye example of Subsection \ref{subsec:2d_bullseye}. Let $X_1$ be uniformly distributed on the surface of the sphere centered around the origin with radius uniformly distributed according to $\text{Unif}\{[1,2]\cup[3,4]\}$. For $i\ne 1$, $X_i$ is also uniformly distributed on the surface of a sphere. The radius is given by the average of the magnitudes of the parents of $X_i$, plus noise $N_i$. $Y$ is the average of $\|X_3\|_2$ and $\|X_5\|_2$, plus noise $N_Y$. The noise variables $N_i$ and $N_Y$ are all i.i.d. with distribution $\text{Unif}[-\epsilon, \epsilon]$.

We compare the performance of four different CI tests for identifying a set of CI relationships involving the target variable $Y$: the proposed method, the $k$-NN-based CI test \citep{Runge2018}, the kernel-based SDCIT \citep{LeeH2017}, and the model-based CCIT \citep{SenSSDS2017} using gradient-boosted decision trees. For unconditional independence tests, the Hilbert Schmidt Independence Criterion (HSIC) \citep{GrettonFTSSS2007} is used in place of SDCIT. The statistical significance of the proposed and $k$-NN CI tests are assessed using the nearest-neighbor permutation-based shuffling method proposed by \citet{Runge2018}. The design choices for the proposed method were chosen similarly as in Subsection \ref{subsec:2d_bullseye}. The feature mappings were chosen to be one-dimensional.

Figure \ref{fig:ci_test_roc} illustrates receiver operating curves (ROC) characterizing the CI tests' ability to correctly declare CI using 6000 samples with $\epsilon=0.3$. Overall, the proposed method shows the strongest performance, since it learns information-efficient scalar sufficient statistics for each feature. As expected, of the other three methods, the model-based CCIT method shows the best performance. The geometry of the features' distributions limits the effectiveness of the $k$-NN and kernel-based non-parametric CI tests, which rely on local density estimates.

\vspace{-2mm}
\subsection{Datacenter Hard Disk Drive Dataset}
\vspace{-1mm}

While the HDD is the workhorse of data storage, its reliability is known to be a weak link \citep{Elerath2009}. As a result, HDD failure prediction using diagnostic time-series data has drawn recent attention \citep{XuWLGL2016}. Accurate failure prediction can reduce the amount of redundant storage needed for ensuring data reliability, thereby reducing datacenter costs. 
%Furthermore, the results of feature selection are of interest for removing redundant sensor features and reducing costs associated with diagnostic data storage. 
Commercial and industrial HDDs report time series records of read error rate, temperature, read/write rate, and other prognostics. The following experiment involves data collected from 10,000 Seagate drives of a single model, made publicly available by Backblaze\footnote{backblaze.com/b2/hard-drive-test-data.html}. A total of 36 features, denoted by $X_1,...,X_{36}$, are collected once daily for a period of 90 days. The task is to predict a binary variable $Y$ which indicates whether or not a drive fails at the end of the period; around a quarter of the drives fail. Additional details are presented in the Supplementary Materials.

We chose $q_\theta(Y|F_W(V),W)$ to be a Bernoulli distribution, and the feature mappings $f_i: 1\le i \le 36$, were parameterized by recurrent neural networks, with each $f_i(X_i)\in\R^3$. The maximum conditioning set size $\Delta$ was set to $3$, and the regularization coefficient was taken to be $\lambda=0.1$. For evaluation, two fresh predictors of $Y$ were trained using all 36 and the selected time series features. In addition, we also compared with an approach based on $\ell_1$ regularization, similar to the approach of \cite{LiCW2015}. Each feature was scaled by a trainable weight before being fed into the model; the $\ell_1$ norm of those weights was regularized. Figure \ref{fig:backblaze_roc} compares ROCs of the three resulting models using held-out test data.

The three predictors perform comparably on the test set, although the proposed method was able to find a much smaller feature set than the $\ell_1$ approach. The proposed method selected 7 features: current pending sector, load cycle, and seek counts, logical blocks read and written, power on hours, and temperature. Those features reflect measurable forms of physical wear. HDDs rely on a moving, mechanical actuator to read and write data from a magnetic disk. Therefore, it is expected for logical block read and write counts to be useful for failure prediction. Similarly, the current pending sector count is a measure of the number of memory units, or sectors, that have become unreadable. The seek and load cycle counts record how often the fragile mechanical actuator is repositioned and re-started,  respectively, and are expected to be early indicators of mechanical failures. Finally, drive age and ambient operating temperatures are known correlates of drive failures \cite{Elerath2009}.

%%%%%%%%%%%%%%%%%%%%%%%%%%%%%%%%%%%%%%%%%%%%%%%%%%%%%%%%%%%%%%%%%%%%%%
\begin{figure}[t]
  \centering
  \includegraphics[width=0.9\columnwidth]{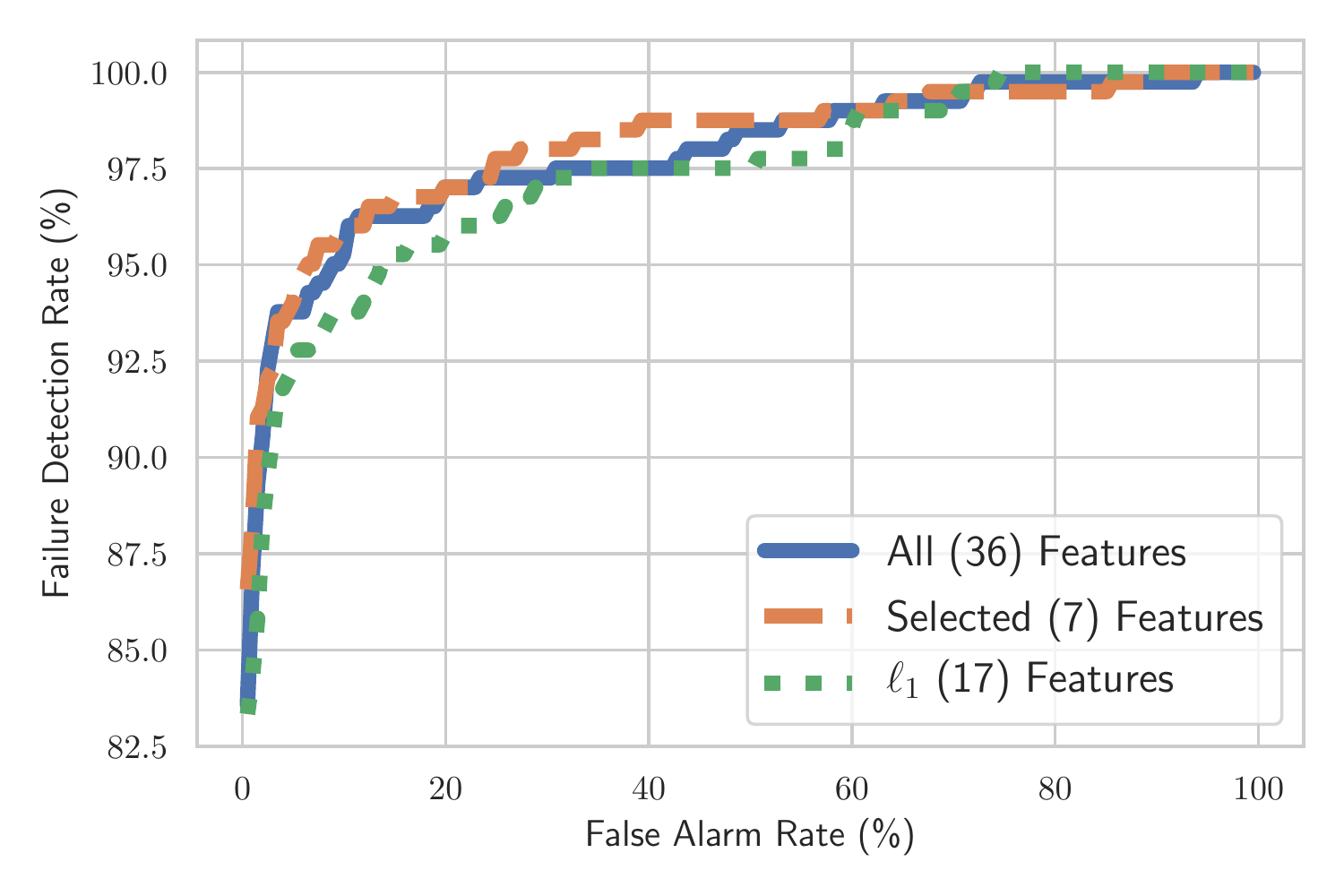}
  \vspace{-2mm}
  \caption{HDD dataset feature selection performance.}
  \vspace{-3mm}
  \label{fig:backblaze_roc}
\end{figure}
%%%%%%%%%%%%%%%%%%%%%%%%%%%%%%%%%%%%%%%%%%%%%%%%%%%%%%%%%%%%%%%%%%%%%%

 \vspace{-2mm}
\section{CONCLUSION}\label{sec:conclusion}
 \vspace{-1mm}

The need for efficient CI testing in high dimensions has limited the applicability of Markov blanket feature selection. Non-parametric methods, such as the CI test based on the $k$-NN CMI estimator, are efficient and robust for low-dimensional features but struggle with the curse of dimensionality. On the other hand, while model-based CI testing methods have demonstrated success in high dimensions, their computational cost is prohibitive.

We proposed a novel two-step approach to performing the CI tests required for Markov blanket feature selection that leverages both the tractability of non-parametric methods and the representation power of model-based methods. First, low-dimensional feature mappings are simultaneously learned for each feature; the $k$-NN-based CMI estimator is subsequently applied to the learned feature maps for CI testing. The mappings preserve all the information required for CMI estimation, and are trained using a novel regularization term that can improve the performance of any non-parametric CMI estimator or CI test that is based on local density estimation. The performance of our proposed approach was evaluated on synthetic data as well as a real dataset of datacenter hard disk drive failures. We believe that the proposed approach can be extended to improve setups such as learning graphical models and causal structure learning when CI testing is a bottleneck due to high dimension.

\vspace{-2mm}
\subsubsection*{Acknowledgements}
\vspace{-1mm}
This material is based on work supported by NSF CNS 16-24811 -- CAEML and its industry members, NSF CCF 1704970, ONR grant W911NF-15-1-0479, and DARPA under the LwLL program.

\vspace{-2mm}
\subsubsection*{References}
\vspace{-1mm}
\bibliographystyle{apalike}
\renewcommand{\bibsection}{}
\bibliography{augmented_knn}

\newpage
\onecolumn

\begin{center}
{\LARGE \bf Model-Augmented Conditional Mutual Information Estimation for Feature Selection\\~\\Supplementary Materials}
\end{center}

\vskip 0.5in

\appendix

\section{Bayesian Networks}
\label{app:sec:MF}
Let $\G$ be a directed acyclic graph (DAG) in which each vertex represents one of the variables from $\mathbf{X}$. $X_i$ is called a parent of $X_j$ if we have the edge $X_i\rightarrow X_j$, and a child of $X_j$ if we have the edge $X_j\rightarrow X_i$ in $\mathcal{G}$. The descendants of $X_j$ is the set of $X_i$ such that a directed path exists from $X_j$ to $X_i$. Every variable is assumed to be a descendent of itself.
\begin{definition}
For DAG $\mathcal{G}$ and distribution $P$ on the set of variables $\mathbf{X}$,
the pair $(\mathcal{G},P)$ is called a Bayesian network if each variable in $\G$ is independent of its non-descendents given its parents in $P$ (referred to as the Markov condition).
This leads to the following factorization of the joint distribution.
\[
P_{{X}} = \prod_{X_j\in{X}} P_{X_j |  \textit{Pa}(X_j)},
\]
where $\textit{Pa}(X_j)$ denotes the set of the parents of $X_j$.
\end{definition}
Let $\mathcal{I}(P)$ represent the set of all conditional independence relationships in $P$, and $\mathcal{I}(\mathcal{G})$ represent the set of all d-separation relations\footnote{See \cite{Pearl2009} for the definition of  d-separation.} in $\G$. 
\begin{definition}[Faithfulness]
The distribution $P$ is \emph{faithful} to structure $\mathcal{G}$ if for any two variables $X_i$, $X_j$, and any subset of variables $X_S$, we have
\[
(X_i~\textit{d-sep}~ X_j|X_S)\in\mathcal{I}(\mathcal{G})\textit{ if } 
(X_i\indep X_j|X_S)\in\mathcal{I}(P).
\]
\end{definition}
If the Markov and faithfulness conditions hold, $\mathcal{G}$ is called a perfect I-map for distribution $P$.
In general, perfect I-map is not unique. For instance, for a joint distribution $P$ on variables $\{X_1,X_2,X_3\}$, such that $\mathcal{I}(P)=\{(X_1\indep X_3|X_2)\}$ all three DAGs $\mathcal{G}_1:X_1\rightarrow X_2\rightarrow X_3$,
$\mathcal{G}_2:X_1\leftarrow X_2\rightarrow X_3$, and
$\mathcal{G}_3:X_1\leftarrow X_2\leftarrow X_3$ are perfect I-maps.
\begin{proposition}
Under Markov and faithfulness assumptions, the Markov blanket of each variable is unique and consists of its parents, children, and coparents.
\end{proposition}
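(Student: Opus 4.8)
The plan is to write $B(Y)$ for the candidate set --- the parents, children, and coparents of $Y$ in $\mathcal G$ --- and to reduce everything to two facts:
(A) $B(Y)$ has the shielding property required of a Markov blanket, namely $Y \indep X_S \mid B(Y)$ for every $X_S \subseteq V \setminus B(Y)$; and
(B) every set $M \subseteq V$ with this shielding property contains $B(Y)$.
Given (A) and (B) the proposition follows at once: by (B) no proper subset of $B(Y)$ can shield $Y$, so (A) makes $B(Y)$ a \emph{minimal} shielding set and hence a Markov blanket; and if $M$ is any Markov blanket then $M$ shields $Y$, so $B(Y) \subseteq M$ by (B), while minimality of $M$ together with (A) (which says the subset $B(Y)$ already shields $Y$) forces $M = B(Y)$, giving both uniqueness and the stated form. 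So the work is in establishing (A) and (B).

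For (B) I would use faithfulness in the contrapositive form ``d-connected $\Rightarrow$ statistically dependent,'' which is exactly the definition of faithfulness above. Suppose some $X_i \in B(Y)$ is missing from a shielding set $M$. If $X_i$ is a parent or child of $Y$, the single edge $X_i - Y$ is an active path under any conditioning set, so $Y$ and $X_i$ are d-connected given $M$; faithfulness then gives $Y \not\indep X_i \mid M$, contradicting the shielding property since $\{X_i\} \subseteq V \setminus M$. Hence $M$ contains every parent and child of $Y$; in particular, if $X_i$ is a coparent --- a parent of some common child $C$ of $Y$ --- then $C \in M$. The length-two path $Y \to C \leftarrow X_i$ has $C$ as a collider, and since $C \in M$ this path is active given $M$, so again $Y \not\indep X_i \mid M$, the same contradiction. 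Therefore $B(Y) \subseteq M$.

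For (A) I would invoke the global Markov property of the Bayesian network (a consequence of the Markov condition for DAGs): it is enough to show $B(Y)$ d-separates $Y$ from every single node $Z \in V \setminus B(Y)$, because then $Y$ is d-separated from any $X_S \subseteq V \setminus B(Y)$ given $B(Y)$, hence $Y \indep X_S \mid B(Y)$. Let $\pi$ be any path from $Y$ to such a $Z$; it has at least two edges, since a direct edge would make $Z$ a parent or child of $Y$. Let $A$ be the neighbour of $Y$ on $\pi$; then $A$ is a parent or child of $Y$, so $A \in B(Y)$. If the edge $Y - A$ points toward $Y$, or points toward $A$ but the next edge of $\pi$ points away from $A$, then $A$ is a non-collider on $\pi$ that lies in $B(Y)$, so $\pi$ is blocked at $A$. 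The only remaining case is $Y \to A \leftarrow B$, where $A$ is a collider; $A$ is a child of $Y$, so $A \in B(Y)$ and the collider is active, but then $B$ is another parent of the child $A$, hence a coparent (or a parent/child) of $Y$ and thus in $B(Y)$ --- so $B \neq Z$, the path $\pi$ continues past $B$, the edge $B \to A$ makes $B$ a non-collider on $\pi$, and $\pi$ is blocked at $B$. In every case $\pi$ is blocked, which proves the d-separation.

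The step I expect to be the main obstacle is the path analysis in (A): one must verify that $Y \to A \leftarrow B$ is genuinely the only configuration in which the block is not already at the neighbour of $Y$, and that the node $B$ reached in that configuration is always in $B(Y)$ (by the very definition of ``coparent'') and is forced to be a non-collider on $\pi$, so that the argument halts after at most one more step. A minor but necessary point is keeping the two directions straight: (A) relies on soundness of d-separation (the Markov condition), whereas (B) relies on its completeness (faithfulness).
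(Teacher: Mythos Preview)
Your proof is correct. The paper, however, does not actually prove this proposition: it is stated in the Supplementary Materials without proof, and in the main text the same fact is simply attributed to the graphical-models literature (Koller and Friedman, 2009). So there is no paper-side argument to compare against.

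For the record, your argument is the standard textbook route and is complete. The decomposition into (A) the shielding property of $B(Y)$ via the global Markov property and (B) containment of $B(Y)$ in every shielding set via faithfulness is exactly how this is usually done. The path analysis in (A) is sound: in the only delicate configuration $Y \to A \leftarrow B$, the node $B$ is a parent of the child $A$ of $Y$ and hence a coparent (and $B \neq Y$ since paths are simple), so $B \in B(Y)$; the outgoing edge $B \to A$ then forces $B$ to be a non-collider on $\pi$, and the path is blocked there regardless of what follows. Your concluding deduction of minimality and uniqueness from (A) and (B) is also correct.
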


\section{Connections to Causal Feature Selection}\label{app:CFS}

Causation is a topic of interest in many applied science disciplines. Knowing the causal structure among the set of variables in the system, enables us to predict the consequence of actions and interventions in the system and answer to counterfactual questions. Moreover, it provides us with a better understanding regarding the way a complex system works by explaining the interactions among the components of the system.

As mentioned in Appendix \ref{app:sec:MF}, a Bayesian network which is an I-map for a given distribution is not unique. \emph{Causal Bayesian network} is a Bayesian network in which directed edge $X_i\rightarrow X_j$ implies that $X_i$ is a direct cause of $X_j$. 
%The parents of variable $X_j$ in the causal Bayesian network are called the direct causes of $X_j$ and 
The set of direct causes (parents) of variable $X_j$ is denoted by $\textit{DC}(X_i)$. 
%Under the faithfulness assumption, the causal Bayesian network which is I-map for the joint distribution is unique. In this structure each variable is independent of its non-descendents given its direct causes. This leads to the following factorization of the joint distribution.
%\[
%P_{{X}} = \prod_{X_i\in{X}} P_{X_i |  \textit{DC}(X_i)},
%\]
%where $\textit{DC}(X_i)$ denotes the set of the direct causes of $X_i$.
In the causal nomenclature, the mechanism which takes the direct causes of a variable $X_i$ as the input and outputs 
variable $X_i$, i.e., $P_{X_i |  \textit{DC}(X_i)}$ is called the causal module corresponding to variable $X_i$. According to the principle of independent causal mechanisms, if there are no latent variables in the system, the causal modules are independent of each other \cite{peters2017elements}. 

Knowing the causal modules enables us to perform prediction under distribution shift: 
%Suppose for two dependent variables $X_1$ and $X_2$ we know that the ground truth causal structure is $X_1\rightarrow X_2$. Hence, the causal modules $P_{X_1}$ and $P_{X_2|X_1}$ are independent of each other. Therefore, if we have an estimation of module $P_{X_2|X_1}$ from data from a domain, 
Suppose in domain 1, we train a predictor for target variable $Y$ which takes the subset of features $X_S$ as the input. Now, if in a second domain the distribution of $X_S$ varies, but the causal module corresponding to $Y$ remains fixed (based on the principle of independent causal mechanisms, this is possible as the causal modules are independent), then if $X_S$ only contains direct causes of $Y$, then the trained predictor can still be used in the second domain. However, if $X_S$ also contains children of $Y$, as is the case for, say, a feature selection based on Markov blanket, then $P_{Y|X_S}$ in the second domain will not necessarily remain the same as the one in the first domain.
Therefore, a feature selection scheme which chooses the direct causes of the target variable, enables unsupervised domain adaptation. This scheme is referred to as \emph{causal feature selection}.

Note that in general, performing interventions is required to identify the direct causes of the target variable, and hence, causal feature selection in general is not feasible. However, as mentioned in Section \ref{sec:MB_feature_selection}, in many applications, the target variable cannot be the \emph{cause} of any of the features in the system. For example, in our example of HDD failure prediction, $Y$ represents the health state of the HDD at the end of a time interval, and as such cannot be the cause of any of the features as it cannot be the cause of past events. In this case, in the causal Bayesian network, the target variable does not have any children. Therefore, the Markov blanket of the target variable will only contain its direct causes. Hence, Markov blanket feature selection is equivalent to causal feature selection. We summarize this argument as follows:
\begin{assumption}
\label{ass:barren}
The target variable $Y$ does not have any children in the causal Bayesian network.
\end{assumption}
\begin{theorem}
Under the Markov and faithfulness conditions and Assumption \ref{ass:barren}, a feature selection scheme which chooses features based on Definition \ref{def:markov_blanket}, outputs the direct causes of the target variable.
\end{theorem}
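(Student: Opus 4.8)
The plan is to obtain the statement as a short corollary of the structural characterization of the Markov blanket already recorded in the excerpt, so the work is essentially bookkeeping with the definitions. First I would invoke the Proposition in Appendix~\ref{app:sec:MF}: under the Markov and faithfulness conditions, the Markov blanket $X_{S_M}$ of $Y$ is \emph{unique} and equals $\textit{Pa}(Y)\cup\textit{Ch}(Y)\cup\textit{CoP}(Y)$, where $\textit{Ch}(Y)$ denotes the children of $Y$ and $\textit{CoP}(Y)$ the coparents (the other parents of children of $Y$). I would spell out the uniqueness explicitly, since it is what makes ``the feature selection scheme which chooses features based on Definition~\ref{def:markov_blanket}'' a well-defined operation returning a single set, and hence what lets the theorem's conclusion be about that one set. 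I would also note, for completeness, that the probabilistic Markov blanket of Definition~\ref{def:markov_blanket} coincides with this graphical object because Markov$+$faithfulness make conditional independence in $P$ equivalent to d-separation in $\mathcal{G}$.

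Next I would bring in Assumption~\ref{ass:barren}. If $Y$ has no children in the causal Bayesian network, then $\textit{Ch}(Y)=\emptyset$; and since a coparent of $Y$ is by definition a parent of \emph{some} child of $Y$, the absence of children forces $\textit{CoP}(Y)=\emptyset$ as well. Substituting into the characterization above collapses it to $X_{S_M}=\textit{Pa}(Y)$: the (unique) Markov blanket of $Y$ is exactly the set of its graph-parents.

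Finally I would translate this into causal terms using the definitions of Appendix~\ref{app:CFS}: in a causal Bayesian network an edge $X_i\to Y$ means $X_i$ is a direct cause of $Y$, so $\textit{Pa}(Y)=\textit{DC}(Y)$. Chaining the two identities, any scheme that selects the Markov blanket in the sense of Definition~\ref{def:markov_blanket} outputs precisely $\textit{DC}(Y)$, which is the claim. Since this also implies such a feature set contains only direct causes of $Y$, the robustness-to-distribution-shift discussion preceding the theorem applies.

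I do not expect a genuine ``hard part'': the argument requires no computation. The only subtlety worth care is the definitional plumbing — confirming the equivalence between the probabilistic and graphical notions of Markov blanket under Markov$+$faithfulness, correctly citing the Proposition that gives the parents/children/coparents decomposition together with its uniqueness, and handling degenerate overlaps (e.g.\ a node that is simultaneously a parent of $Y$ and, via another child, a coparent) so that the set equalities are literally valid. These are all matters of invoking the equivalences already established in the excerpt rather than proving anything new.
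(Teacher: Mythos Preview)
Your proposal is correct and follows essentially the same route as the paper: the paper's own argument (given in the paragraph immediately preceding the theorem) uses Assumption~\ref{ass:barren} to conclude $Y$ has no children, hence the Markov blanket reduces to the parents, which in the causal Bayesian network are the direct causes. Your write-up is more careful about citing the Proposition and handling uniqueness, but the underlying logic is identical.
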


\section{Proof of Theorems}

\begin{proof}[Proof of Theorem \ref{theorem:mi_lower_bd}]
    The error is given by
    \begin{align}
        I(Y;X_i) - I(Y;f_i(X_i))
            &= -h(Y|X_i) + h(Y|f_i(X_i)) \nonumber \\
            &= -h(Y|X_i) - \E_{XY} [\log q_{\theta_i}(Y|f_i(X_i))] 
            - D(p(Y|f_i(X_i)\|q_{\theta_i}(Y|f_i(X_i)) \nonumber \\
            &\le -h(Y|X_i) - \E_{XY} [\log q_{\theta_i}(Y|f_i(X_i))].\label{eq:ineq_mi_lb}
    \end{align}
    The inequality \eqref{eq:ineq_mi_lb} follows from the non-negativity of the KL-divergence. Since the right hand side of \eqref{eq:ineq_mi_lb} is an upper bound on the error and the (differential) conditional entropy $h(Y|X_i)$ is not a function of $f_i$ or $\theta_i$, maximizing $\E_{XY} [\log q_{\theta_i}(Y|f_i(X_i))]$ over $f_i$ or $\theta_i$ minimizes an upper bound on the error.
\end{proof}

\begin{proof}[Proof of Theorem \ref{theorem:cmi_consistency}]
By the chain rule of mutual information, we have $I(Y;X_i|X_S) = I(Y;X_i,X_S) - I(Y;X_S)$, and similarly $I(Y;f_i(X_i)|F(X_S)) = I(Y;f_i(X_i),F(X_S)) - I(Y;F(X_S))$. Subtracting and applying the triangle inequality gives
\begin{align*}
\big|I(Y;X_i|X_S) - I(Y;f_i(X_i)|F(X_S))\big| 
&\le \big|I(Y;X_S) - I(Y;F(X_S))\big| + \big|I(Y;X_i,X_S) - I(Y;f_i(X_i),F(X_S))\big|\\
&\overset{(a)}{=} \big(I(Y;X_S) - I(Y;F(X_S))\big) + \big(I(Y;X_i,X_S) - I(Y;f_i(X_i),F(X_S))\big).
\end{align*}
The equality $(a)$ follows from the data-processing inequality; each of the two terms inside the absolute values are non-negative. Applying Theorem \ref{theorem:mi_lower_bd} to each of the two terms gives the result.
\end{proof}

\section{Derivation of $I(X;Y)$ for the Bullseye Experiment}
Assume that $\epsilon\leq 0.5$. The mutual information is given by
\begin{align*}
    I(X;Y) 
    &= I(R;Y) \\
    &= h(Y) - h(Y\vert R) \\
    &= h(Y) - h(R+N \vert R) \\
    &= h(Y) - h(N).
\end{align*}
Since $N\sim\text{Unif}[-\epsilon,\epsilon]$, it has differential entropy $h(N)=\log(2\epsilon)$. To find $h(Y)$, first note that $Y$ can be seen as a randomization between the inner and outer rings:
\begin{equation}
    Y = \begin{cases}
        Y_1 & \text{w.p.}\, 1/2 \\
        Y_2 & \text{w.p.}\, 1/2.
    \end{cases}
\end{equation}
Define $Y_1=R_1+N_1$ and $Y_2=R_2+N_2$. $R_1\sim\text{Unif}[0.25, 0.5]$, $R_2\sim\text{Unif}[0.75, 1.0]$, and $N_1$ and $N_2$ are identically distributed copies of $N$. The marginal distribution $p(y_1)$ is given by
\begin{align}
    p(y_1) 
    &= \int_{-\epsilon}^\epsilon p(N=n)p(y_1\vert n) dn \\
    &= \frac{1}{2\epsilon} \int_{-\epsilon}^\epsilon p(R_1=y_1-n) dn \\
    &= \frac{2}{\epsilon} \int_{-\epsilon}^\epsilon \bs{1}\{y_1 \in [0.25+n, 0.5+n] \} dn.
\end{align}
There are three cases, depending on the value of $y_1$. Taking the integral in each case gives
\begin{equation}\label{eq:p_y1}
    p(y_1) = \begin{cases}
    \frac{y_1-0.25+\epsilon}{0.5\epsilon} & \text{if $y_1\in [0.25-\epsilon, 0.25+\epsilon]$} \\
    4 & \text{if $y_1\in [0.25+\epsilon, 0.5-\epsilon]$} \\
    \frac{0.50+\epsilon - y_1}{0.5\epsilon} & \text{if $y_1\in [0.50-\epsilon, 0.50+\epsilon]$}
    \end{cases}
\end{equation}
The marginal $p(y_2)$ can be obtained by shifting $p(y_1)$. $h(Y)$ can now be computed numerically for different values of $\epsilon$ using Eq. \eqref{eq:p_y1} and the fact that
\begin{equation}
    h(Y) = -\int_{-\infty}^\infty \frac{1}{2}p(y_1)\log \bigg(\frac{1}{2}p(y_1)\bigg) dy_1 
    -\int_{-\infty}^\infty \frac{1}{2}p(y_2)\log \bigg(\frac{1}{2}p(y_2)\bigg) dy_2.
\end{equation}

\section{Experiment Details}

\subsection{2D Bullseye}
Both $f_{\text{regularized}}$ nominal $f_{\text{nominal}}$ are learned using three-layer, feedforward neural networks with 8 hidden units and rectified linear activation. The parameters of the surrogate distribution were learned using a feedforward neural network of the same configuration. The regularization coefficient was chosen to be $\lambda=0.1$, and the mappings were trained using 2000 samples.

\subsection{3D Bullseye}
The mappings were each learned using a three-layer feedforward neural network with 32 hidden units and the rectified linear activation. The parameters of the surrogate distribution were learned using a three-layer feedforward neural network with 164 hidden units, again with the rectified linear activation. As before, we used $\lambda=0.1$. The mappings were trained using 6000 examples. For the $k$-NN CI tests, we used a $k$-value of 100, 1000 shuffling instances, and nearest-neighbor permutation $k_{perm}=5$, all following suggested values in \citep{Runge2018}. CCIT was set up using the publicly available code\footnote{https://github.com/rajatsen91/CCIT}, and likewise for SDCIT\footnote{https://github.com/sanghack81/SDCIT}.

\subsection{Datacenter HDD Dataset}
Two layer gated recurrent units with 16 states were used to implement each feature mapping. The surrogate distribution was parameterized by a three-layer feedforward neural network with 72 hidden units and the rectified linear activation. As for the other experiments, we took $\lambda=0.1$. Each feature mapping was chosen to be 3-dimensional, and the maximum conditioning set size was set to $\Delta=3$.

We used data collected from 10,000 Seagate ST4000DM000 HDDs. Table \ref{table:smart_attrs} lists the time-series features reported by each drive. The features are known as Self-Monitoring, Analysis, and Reporting Technology (SMART) attributes. Some of the features are reported in two separate formats: \emph{raw} and \emph{normalized}. The raw features can be considered as raw sensor measurements, whereas the normalized features are quantized and scaled in a manufacturer-specific way in order to be compared between HDD makes and models. Table \ref{table:smart_attrs} lists all of the features used for this experiment. Finally, note that Seek Errors and Seek Count are not typically-reported SMART attributes. These were generated by extracting the top 16 and bottom 32 bits of the raw SMART attribute "Seek Error Rate." Since we noticed that the normalized version of "Seek Error Rate" could be well-approximated by dividing the Seek Error count by the Seek Count, we do not use Seek Error Rates for our experiments. Table \ref{table:smart_attrs} lists all of the features in the dataset, with descriptions.\footnote{https://en.wikipedia.org/wiki/S.M.A.R.T.}

\begin{table}[t]
\caption{SMART Attributes for Seagate's drive model ST4000DM000. * denotes attributes reported in both normalized and raw form. $\dagger$ have only normalized values, and $\ddagger$ only raw values. There are 36 attributes in total.}
\label{table:smart_attrs}
\begin{center}
\begin{tabular}{ll}
\multicolumn{1}{c}{\bf Feature}  &\multicolumn{1}{c}{\bf Description} \\
\hline \\
*Read Error Rate                 &Rate of errors that occurred when reading data from  disk surface \\
*Temperature                     &Device temperature \\
*Airflow Temperature             &Contains same information as the Temperature feature \\
*Reallocated Sectors             &Count of bad memory sectors that have been found and remapped \\
*SATA Downshift Errors           &Number of downshifts of link speed \\
*End to End Errors               &Count of parity errors in the data path to the media via the drive's cache RAM \\
*Load Cycles                     &Count of load/unload cycles into head landing zone position \\
*Command Timeouts                &Count of aborted operations due to HDD timeout \\
*Current Pending Sectors         &Count of sectors waiting to be remapped, due to unrecoverable read errors \\
*Reported Uncorrectable Errors   &Count of errors that could not be recovered using error correction codes \\
*Power On Hours                  &Count of hours in power-on state \\
*Start/Stop Count                &Count of read spindle start/stop cycles \\
*High Fly Writes                 &Number of times a recording head is flying outside its normal operating range \\
\textsuperscript{$\dagger$}Offline Uncorrectable Sectors   &Count of uncorrectable errors when reading/writing a sector \\
\textsuperscript{$\dagger$}Cyclic Redundancy Check Errors  &Count of errors in external data transfer via interface cable  \\
\textsuperscript{$\dagger$}Spin Up Time                    &Average time of spindle spin up, from zero RPM to fully operational \\
\textsuperscript{$\dagger$}Seek Errors                     &Number of seek errors of the magnetic heads \\
\textsuperscript{$\dagger$}Seek Count                      &Total number of magnetic head seek operations\\
\textsuperscript{$\ddagger$}Power Off Retracts              &Count of power-offs or emergency head retractions \\
\textsuperscript{$\ddagger$}Power Cycles                    &Count of HDD power on/off cycles \\
\textsuperscript{$\ddagger$}Head Flying Hours               &Total time positioning the read/write heads \\
\textsuperscript{$\ddagger$}Logical Blocks Written          &Count of logical block addresses written \\
\textsuperscript{$\ddagger$}Logical Blocks Read             &Count of logical block addresses read
\end{tabular}
\end{center}
\end{table}

\end{document}